\newcommand{\cmark}{\ding{51}}%
\newcommand{\xmark}{\ding{55}}%
\icmltitlerunning{Lower-Bounded Proper Losses for Weakly Supervised Classification}
\newtheorem{theorem}{Theorem}
\newtheorem{lemma}[theorem]{Lemma}
\newtheorem{corollary}[theorem]{Corollary}
\newtheorem{proposition}[theorem]{Proposition}
\theoremstyle{definition}
\newtheorem{example}[theorem]{Example}
\newtheorem{definition}[theorem]{Definition}
\DeclareMathOperator*{\argmin}{arg\,min}
\DeclareMathOperator*{\argmax}{arg\,max}
\DeclareMathOperator{\aff}{aff}
\DeclareMathOperator{\interior}{int}
\DeclareMathOperator{\coker}{coker}
\newcommand{\subgrad}{\underline{\nabla}}
\newcommand{\dataspace}{\mathcal{X}}
\newcommand{\truelabels}{\mathcal{Z}}
\newcommand{\weaklabels}{\mathcal{Y}}
\newcommand{\domain}{\mathcal{C}}
\newcommand{\setD}{\mathcal{D}}
\newcommand{\numtls}{c}
\newcommand{\numwls}{{c_\mathrm{W}}}
\newcommand{\distr}{\mathcal{P}}
\newcommand{\tldistr}{\distr(\truelabels)}
\newcommand{\wldistr}{\distr(\weaklabels)}
\newcommand{\idx}{I_\mathcal{X}}
\newcommand{\idz}{I_\mathcal{Z}}
\renewcommand{\vec}{\boldsymbol}
\newcommand{\one}{\vec{1}}
\newcommand{\onex}{\one_\mathcal{X}}
\newcommand{\oney}{\one_\mathcal{Y}}
\newcommand{\onez}{\one_\mathcal{Z}}
\newcommand{\onexperp}{\onex^\perp}
\newcommand{\onezperp}{\onez^\perp}
\newcommand{\xx}{\vec{x}}
\newcommand{\yy}{\vec{y}}
\newcommand{\pp}{\vec{p}}
\newcommand{\qq}{\vec{q}}
\newcommand{\vv}{\vec{v}}
\newcommand{\expect}{\mathbb{E}}
\newcommand{\real}{\mathbb{R}}
\newcommand{\extreal}{\overline{\real}}
\newcommand{\inner}[2]{\langle #1, #2 \rangle}
\newcommand{\card}[1]{\left\lvert #1 \right\rvert}
\newcommand{\tproper}{$T$-proper\ }
\newcommand{\T}{^\mathsf{T}}
\begin{document}

\twocolumn[
\icmltitle{Lower-Bounded Proper Losses for Weakly Supervised Classification}

\begin{icmlauthorlist}
\icmlauthor{Shuhei M.~Yoshida}{nec,aip}
\icmlauthor{Takashi Takenouchi}{aip,fuh}
\icmlauthor{Masashi Sugiyama}{aip,ut}
\end{icmlauthorlist}

\icmlaffiliation{nec}{Biometrics Research Laboratories, NEC Corporation, Kawasaki, Kanagawa, Japan}
\icmlaffiliation{aip}{RIKEN Center for Advanced Intelligence Project, Chuo-ku, Tokyo, Japan}
\icmlaffiliation{fuh}{National Graduate Institute for Policy Studies, Minato-ku, Tokyo, Japan}
\icmlaffiliation{ut}{Department of Complexity Science and Engineering, The University of Tokyo, Kashiwa, Chiba, Japan}

\icmlcorrespondingauthor{Shuhei M.~Yoshida}{s\_m\_yoshida@nec.com}

\icmlkeywords{Weakly supervised learning, proper loss, regularization}

\vskip 0.3in
]

\printAffiliationsAndNotice{}

\begin{abstract}
This paper discusses the problem of weakly supervised classification, in which instances are given weak labels that are produced by some label-corruption process. The goal is to derive conditions under which loss functions for weak-label learning are proper and lower-bounded---two essential requirements for the losses used in class-probability estimation. To this end, we derive a representation theorem for proper losses in supervised learning, which dualizes the Savage representation. We use this theorem to characterize proper weak-label losses and find a condition for them to be lower-bounded. From these theoretical findings, we derive a novel regularization scheme called generalized logit squeezing, which makes any proper weak-label loss bounded from below, without losing properness. Furthermore, we experimentally demonstrate the effectiveness of our proposed approach, as compared to improper or unbounded losses. The results highlight the importance of properness and lower-boundedness.
\end{abstract}

\section{Introduction}

Recent machine learning techniques have achieved state-of-the-art performance on many prediction tasks,
but they usually require massive training data with clean annotations.
One approach to reduce the costs of data preparation is so-called weakly supervised learning: each instance is annotated with a weak label that is cheaper to obtain but less informative than a true label.
For classification, many types of weak supervision have been proposed.
For example, in learning with noisy labels~\citep{angluin.laird1988,natarajan.etal2013,patrini.etal2017}, one observes an instance with a label that may be corrupted.
Positive-unlabeled (PU) learning of binary classification uses positive and unlabeled data, but not labeled negative data~\citep{elkan.noto2008,duplessis.etal2015}.
Another example is learning from partial labels, which are collections of candidate labels among which only one is true~\citep{cour.etal2011}.
Many of these approaches are understood as learning from weak labels that are produced by label-corruption processes,
and some authors have taken unified approaches to tackle these problems~\citep{vanrooyen.williamson2018,zhang.etal2019d}.

A fundamental theoretical question is under what conditions learning from weak labels is possible.
To address this question, analysis of loss functions plays a central role.
Among loss functions, proper losses are a particularly important class of losses that can correctly estimate class posterior probabilities~\citep{winkler.murphy1968,buja.etal2005,gneiting.raftery2007}.
Two major classes of proper weak-label losses have been proposed in the literature.
One class derives from unbiased risk estimation, or backward loss correction~\citep{patrini.etal2017},
in which a label-corruption process is inverted to estimate an expected risk with respect to the distribution of true labels.
This approach has been taken, for example, in partial-label learning~\citep{cid-sueiro2012}, noisy-label learning~\citep{natarajan.etal2013,patrini.etal2017}, PU learning~\citep{duplessis.etal2015}, and complementary-label learning~\citep{ishida.etal2019}.
For a general label-corruption process, a recent work showed how to construct a proper weak-label loss from a loss for supervised learning~\citep{vanrooyen.williamson2018}.
The other class of losses follows from forward loss correction~\citep{patrini.etal2017}, in which proper loss functions are used for estimating the posterior distribution of weak labels.
This approach has been applied to noisy-label learning~\citep{patrini.etal2017} and complementary-label learning~\citep{yu.etal2018}.
Moreover, \citet{zhang.etal2019d} applied a forward-corrected loss to more general problems of learning from weak labels, although their discussion focused on the negative log-likelihood loss.

In addition to properness, lower-boundedness is another important requirement for loss functions so that learning can succeed.
Losses that are not bounded from below are problematic, as they cause the objective to diverge to negative infinity, especially when using complex models like deep neural networks~\citep{kiryo.etal2017}.
Forward-corrected losses are known to be proper and lower-bounded~\citep{patrini.etal2017,yu.etal2018}.
On the other hand, backward-corrected losses are generally not guaranteed to be bounded from below~\citep{natarajan.etal2013,cid-sueiro.etal2014,duplessis.etal2015,kiryo.etal2017,patrini.etal2017,ishida.etal2019,vanrooyen.williamson2018}.
From a practical viewpoint, implementation tricks proposed by \citet{kiryo.etal2017} cause a training objective to be positive and work reasonably well, but they also result in an improper loss.
Those tricks have also been applied to complementary-label learning~\citep{ishida.etal2019} and unlabeled-unlabeled learning~\citep{lu.etal2020a}.
\citet{chou.etal2020} proposed a novel class of surrogate losses that are bounded from below, but these losses are not guaranteed to be proper.
To the best of our knowledge, conditions under which proper weak-label losses are bounded from below have yet to be addressed.

\paragraph{Our contributions}
This paper discusses proper losses for weakly supervised learning of class posterior probability estimation.
In particular, we obtain conditions under which proper weak-label losses are bounded from below.
To do so, we derive the dual representation of proper losses for supervised learning.
This representation is a dualized version of the Savage representation~\citep{savage1971,cid-sueiro.etal1999,gneiting.raftery2007}, which characterizes a proper loss in terms of a Bayes risk.
By using a theorem that we obtain, we characterize proper weak-label losses and derive a sufficient condition under which the resulting losses are bounded from below.
The derived condition is not necessary but covers a large class of losses that are parameterized by convex functions constrained by a single inequality.

From these results, we derive a novel regularization scheme called generalized logit squeezing (gLS), which makes any proper weak-label loss bounded from below, without losing its properness.
We also experimentally demonstrate the effectiveness of our proposed approach as compared to unbounded or improper losses.
We show that gLS yields superior or competitive results as compared to baseline methods, regardless of the precise values of the hyperparameters that are specific to gLS, as long as those parameters are in the regime in which gLS gives $T$-proper and bounded losses.

\section{Formulation}

In this section, we introduce notations and basic notions, which we adopted from previous studies~\citep{winkler.murphy1968,buja.etal2005,gneiting.raftery2007,cid-sueiro2012,vanrooyen.williamson2018}. 
We begin by summarizing the mathematical notations in Section~\ref{sec:notation}.
Then, the two key notions of weak labels and proper losses are described in Sections~\ref{sec:weaklabels} and~\ref{sec:Tproper}, respectively.

\subsection{Notations}\label{sec:notation}

Boldface and calligraphic letters respectively denote vectors and sets.
The sets of real numbers and extended real numbers are denoted by $\real$ and $\extreal\equiv \real\cup\{-\infty,\infty\}$, respectively. Let $\mathcal{X}$ be a discrete set and $\card{\mathcal{X}}$ be its cardinality.
The set $\real^{\mathcal{X}}$ is the $\card{\mathcal{X}}$-dimensional vector space whose dimensions are indexed with $x\in\mathcal{X}$.
A matrix $\idx$ is the identity matrix on $\real^\mathcal{X}$,
$\onex$ is a vector in $\real^\mathcal{X}$ such that $(\onex)_x=1$ for all $x\in \mathcal{X}$,
and $\onexperp$ is the orthogonal complement of $\onex$. The set of probability distributions over $\mathcal{X}$ is identified with the probability simplex $\mathcal{P}(\mathcal{X}) \equiv
        \{ \pp\in\real^{\mathcal{X}} \vert
            \sum_{x\in\mathcal{X}} p_x = 1, 
            p_x \geq 0 \text{ for all } x \in \mathcal{X}\}$.

The theory of convex functions has offered useful tools for analyzing proper losses~\citep{gneiting.raftery2007,dawid2007}. A function $f:\domain\to\real$ is convex if
$ 
    f((1-\lambda)\xx_0 + \lambda \xx_1) \leq (1-\lambda) f(\xx_0) + \lambda f(\xx_1)
    \label{eq:def-convex}
$ 
for all $\lambda \in (0,1)$ and $\xx_0, \xx_1 \in \domain$.
It is strictly convex if the equality holds only when $\xx_0 = \xx_1$.
A convex function $f$ is said to be closed if its epigraph $\{ (\xx, t) \in \domain\times\real \vert t \geq f(\xx) \}$ is a closed set.
A vector $\subgrad f(\xx)$ is a subgradient of $f$ at a point $\xx\in \domain$ if it satisfies $f(\yy) \geq f(\xx) + \inner{\subgrad f(\xx)}{\yy-\xx}$ for all $\yy \in \domain$.
In general, subgradients may not be unique at a given point.
The set of all the subgradients of $f$ at $\xx\in \domain$ is called the subdifferential and is denoted by $\partial f(\xx)$.
The convex conjugate of a convex function $f: \domain\to\real$ is denoted by $f^\ast$ and is defined as $f^\ast(\vv) = \sup_{x\in\domain} \left[ \inner{\vv}{\xx} - f(\xx) \right]$.

\subsection{Weak Labels in Classification Learning}\label{sec:weaklabels}

Let $\dataspace$ be a space of instances, $\truelabels=\{ z_1, z_2,\dots, z_\numtls \}$ be a set of true (latent) labels, and $\weaklabels=\{ y_1, y_2, \dots, y_\numwls \}$ be a set of weak (observed) labels.
In weakly supervised learning of classification, an algorithm is given a training set sampled from $\dataspace\times\weaklabels$ in accordance with an unknown data distribution,
and it learns to classify an instance $x\in\dataspace$ into a true class $z\in\truelabels$.
The true labels for training instances are not available to the learner.

We focus on a setting in which weak labels are characterized by a conditional distribution $p(y \vert x, z)$, or a label transition matrix $T (x)$, whose matrix element $T_{yz}(x)$ is $p(y\vert x, z)$.
In this paper, we assume that (a) $T(x) \equiv T$, which means that weak labels are independent of input data $x$,
and that (b) $T$ has a left inverse $R$ such that $RT = \idz$.
In principle, Assumption~(a) can be lifted by replacing $T$ with $T(x)$ and $R$ with $R(x)$ in the following analysis, even though such scenarios are more challenging to deal with in practice, because they require knowing $T(x)$ for all $x\in\dataspace$. Assumption~(b) requires that weak labels be informative enough for a learner to infer a distribution over the true labels.
Concretely, we can reconstruct true-label posterior probabilities from weak-label posterior probabilities by using the following identity:
\begin{align}
    p(z \vert x) 
    = \sum_{z' \in \truelabels} (RT)_{zz'}
        p(z' \vert x)
    = \sum_{y\in \weaklabels} R_{zy} p(y \vert x).
\end{align}
A label transition matrix $T$ satisfying Assumption~(b) is said to be reconstructible, and $R$ is called a reconstruction matrix of $T$.
In particular, $T$ is reconstructible only if $|\truelabels| \leq |\weaklabels|$.

Solving weakly supervised classification always requires some assumption like Assumption~(b) that constrains the form of $T$.
See Appendix~\ref{apdx:left-invertibility} for a comparison of Assumption~(b) with other assumptions that have been made in previous works.

The following are illustrative examples with $\truelabels=\{ z_1 , z_2, z_3 \}$.

\begin{example}[Learning with label noise, \citet{natarajan.etal2013}]
If instances are equipped with noisy labels, then the weak-label set $\weaklabels$ is identical to $\truelabels$. For a three-class setting with symmetric noise, $T$ is
\begin{align}
    T &= \begin{pmatrix}
    1-p & p/2 & p/2 \\
    p/2 & 1-p & p/2 \\
    p/2 & p/2 & 1-p
    \end{pmatrix}, 
\end{align}
and its reconstruction matrix is
\begin{align}
    R &= \frac{1}{2-3p}\begin{pmatrix}
    2-p & -p & -p \\
    -p & 2-p & -p \\
    -p & -p & 2-p
    \end{pmatrix},
\end{align}
where $p\in(0,1)$ is the mislabeled probability. Note that $T$ is not reconstructible if $p=\frac{2}{3}$, in which case the weak labels become independent of the true labels.
\end{example}

\begin{example}[Partial labels, \citet{cour.etal2011}]\label{ex:partial-label}
Consider three-class classification with $\weaklabels = \{ (1,0,0), (0,1,0), \linebreak (0,0,1), (1,1,0), (1,0,1), (0,1,1), (1,1,1) \}
$.
A label $y\in\weaklabels$ is called a partial label.
For example, $(1,1,0)$ indicates that the true label is either $z_1$ or $z_2$, but not $z_3$.
In a scenario in which a spurious label is added with probability $p$, the label transition matrix $T$ is
\begin{align}
    T=\begin{pmatrix}
    T_1\T & T_2\T & T_3\T
    \end{pmatrix}\T,
\end{align}
where
\begin{align}
    T_1&= (1-p)^2 I_3, \\
    T_2&=\begin{pmatrix}
    (1-p)p & (1-p)p & 0 \\
    (1-p)p & 0 & (1-p)p \\
    0 & (1-p)p & (1-p)p \\
    \end{pmatrix}, \\
    T_3&=\begin{pmatrix}
    p^2 &
    p^2 &
    p^2 
    \end{pmatrix}.
\end{align}
This $T$ is left-invertible unless $p=1$. The left-inverse is not unique.
\end{example}

So far, we have assumed that there is only one weak-label set $\weaklabels$ and a label transition matrix $T$, and the arguments in the rest of this paper are made for such a scenario.
Note, however, that the arguments here can also be applied to scenarios in which two or more data sources with different noise characteristics are available.
Importantly, this can be done without changing any formal aspect of our theory.
See Appendix~\ref{apdx:multisource} for the details of this point.

\subsection{Proper Losses for Weak-Label Learning}\label{sec:Tproper}

A common strategy for classification is to estimate the class posterior probabilities. To this end, an expected loss should preferably be minimized when an estimator gives the true posterior probabilities:
\begin{align}
    \expect_{(x,z)\sim p(x,z)}
        [ l(q( z \vert x), z) ]
    \geq \expect_{(x,z)\sim p(x,z)}
        [ l(p(z \vert x), z) ],
    \label{eq:def-proper}
\end{align}
where $p(x,z)\in\distr(\dataspace\times\truelabels)$ is a sample distribution, $q(z\vert x) \in \tldistr$ denotes the estimated posterior probabilities for a given instance $x\in\dataspace$, and $l:\tldistr\times\truelabels \to \extreal$ is a loss function.
Because the inequalities at different points in $\dataspace$ are mutually independent, we focus on the conditional risk at a fixed $x$, omit the conditioning variable $x$, and simply use a vector notation like $\pp\in\tldistr$ for the class posterior probabilities in the rest of the paper.
Loss functions satisfying Eq.~\eqref{eq:def-proper} are said to be proper~\citep{winkler.murphy1968}.
A loss function is said to be strictly proper when the equality in Eq.~\eqref{eq:def-proper} holds only if $\pp=\qq$~\citep{gneiting.raftery2007}.
Strict properness is often more desirable than properness itself, because it leads to a Fisher-consistent estimator  $\argmin_{\qq} \expect_{z\sim \pp}[l(\qq, z)]$ for the class posterior probabilities.
It also guarantees that the minima of the empirical and expected losses are unique, which thereby renders the loss minimization problem well-posed.

In weak-label learning, we use a loss function defined on a pair of predicted posterior probabilities $\qq\in\tldistr$ and a weak label $y\in\weaklabels$; we refer to this function as a weak-label loss.
The notion of properness can be extended to weak-label losses~\citep{cid-sueiro2012}.
\begin{definition}
Let $T$ be a label transition matrix. A weak-label loss $l_\mathrm{W}: \tldistr\times\weaklabels \to \extreal$
    is called $T$-proper if, for all $\pp$ and $\qq$ in $\tldistr$,
    \begin{align}
        \expect_{y\sim T\pp} \left[
            l_\mathrm{W}(\qq, y)
        \right]
        \geq \expect_{y\sim T\pp} \left[
            l_\mathrm{W}(\pp, y)
        \right],
        \label{eq:def-T-proper}
    \end{align}
where the vector $T\pp$ is a point in the probability simplex $\wldistr$ and represents a probability distribution over     $\weaklabels$.
The weak-label loss is said to be strictly proper when the equality in Eq.~\eqref{eq:def-T-proper} holds only if $\pp=\qq$.
\end{definition}

\section{Dual Representation of Proper Losses}\label{sec:dual-rep}

In this section, we derive a representation of proper loss functions for supervised learning, which we call a dual representation.
It is closely related to the so-called Savage representation~\citep{savage1971,gneiting.raftery2007}.
The Savage representation expresses a proper loss in terms of its Bayes risk, whereas our representation uses a convex function that is related to the convex conjugate of the Bayes risk.
The dual representation will be useful for our later discussion of the lower-unboundedness of proper weak-label losses.

We start by reviewing the Savage representation,
which requires a mild regularity condition~\citep{gneiting.raftery2007}.
In general, losses can be positive infinity for some $(\qq,z)\in \tldistr\times\truelabels$.
A loss function is said to be regular if it is finite for any $(\qq,z)\in \tldistr\times\truelabels$ except possibly that $l(\qq,z)=\infty$ when $q_z = 0$.
Regular proper losses for class posterior probability estimation are known to have the following representation~\citep{cid-sueiro.etal1999,gneiting.raftery2007}.

\begin{theorem}[Savage representation]\label{thm:savage}
A regular loss function $l:\tldistr\times\truelabels \to \extreal$ is (strictly) proper if and only if there exists a closed (strictly) convex function $S:\tldistr \to \real$ such that for $\qq\in\tldistr$ and $z\in\truelabels$,
\begin{align}
    l(\qq, z) = 
        - \left[ \subgrad S(\qq) \right]_z
        + \inner{\qq}{\subgrad S(\qq)} - S(\qq),
    \label{eq:savage}
\end{align}
where $\subgrad S(\qq) \in \extreal^{\truelabels}$ is a subgradient of $S$ at a point $\qq\in\tldistr$.
\end{theorem}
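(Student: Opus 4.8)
The plan is to identify the convex function $S$ with the negated Bayes risk of $l$ and to read the representation directly off the subgradient inequality. Writing $l(\qq,\cdot)\in\extreal^{\truelabels}$ for the vector of losses and $\inner{\pp}{l(\qq,\cdot)}=\expect_{z\sim\pp}[l(\qq,z)]$ for the conditional risk, properness is precisely the statement $\inner{\pp}{l(\qq,\cdot)}\ge\inner{\pp}{l(\pp,\cdot)}$ for all $\pp,\qq\in\tldistr$. I would set
\begin{align}
 S(\pp) \equiv -\inner{\pp}{l(\pp,\cdot)},
\end{align}
which is finite under the regularity assumption because the convention $0\cdot\infty=0$ kills any infinite entry $l(\pp,z)=\infty$, as such entries occur only where $p_z=0$. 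The whole argument then hinges on one observation: properness is \emph{exactly} the subgradient inequality for $S$ when the subgradient at $\qq$ is taken to be $-l(\qq,\cdot)$.

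For the ``if'' direction I would assume the representation with a closed convex $S$ and compute the conditional risk. Substituting the formula and using $\sum_z p_z=1$ gives $\inner{\pp}{l(\qq,\cdot)}=\inner{\qq-\pp}{\subgrad S(\qq)}-S(\qq)$, and at $\qq=\pp$ this reads $\inner{\pp}{l(\pp,\cdot)}=-S(\pp)$; hence
\begin{align}
 \inner{\pp}{l(\qq,\cdot)}-\inner{\pp}{l(\pp,\cdot)}
 = S(\pp)-S(\qq)-\inner{\subgrad S(\qq)}{\pp-\qq},
\end{align}
which is nonnegative by the subgradient inequality, and strictly positive for $\pp\ne\qq$ once $S$ is strictly convex. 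A routine check I would include is that both this difference and the representation itself are invariant under replacing $\subgrad S(\qq)$ by $\subgrad S(\qq)+a\onez$, since $\inner{\qq}{\onez}=1$ and $\inner{\onez}{\pp-\qq}=0$ on the simplex; this legitimizes speaking of ``a'' subgradient even though $S$ lives only on the hyperplane $\sum_z q_z=1$ and its subgradient is determined only up to the $\onez$ direction.

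For the ``only if'' direction I would show directly that $-l(\qq,\cdot)$ is a subgradient of $S$ at $\qq$. Expanding $S(\qq)+\inner{-l(\qq,\cdot)}{\pp-\qq}$, the terms containing $\qq$ cancel and it collapses to $-\inner{\pp}{l(\qq,\cdot)}$, so the inequality $S(\pp)\ge S(\qq)+\inner{-l(\qq,\cdot)}{\pp-\qq}$ is literally $\inner{\pp}{l(\qq,\cdot)}\ge\inner{\pp}{l(\pp,\cdot)}$, i.e.\ properness. Since a subgradient thus exists at every point, $S$ is convex; equivalently $S(\pp)=\sup_{\qq}\inner{\pp}{-l(\qq,\cdot)}$ exhibits $S$ as a pointwise supremum of affine functions, which also yields closedness. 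Finally, substituting $\subgrad S(\qq)=-l(\qq,\cdot)$ and $S(\qq)=-\inner{\qq}{l(\qq,\cdot)}$ into the right-hand side of the claimed formula recovers $l(\qq,z)$.

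The two obstacles I anticipate are the following. First, the strict case: strict properness says the subgradient inequality is strict, for the \emph{particular} subgradient $-l(\qq,\cdot)$, at every $\pp\ne\qq$, and I would need to upgrade this to genuine strict convexity of $S$ (no segment on which $S$ is affine). This requires the standard but careful observation that if $S$ were affine on some segment, every supporting hyperplane at an interior point would also touch $S$ at the endpoints, contradicting the strict inequality. Second, the regularity and boundary bookkeeping: because losses may be $\infty$ and subgradients live in $\extreal^{\truelabels}$, I must track infinite entries on the relative boundary of $\tldistr$, relying on the regularity condition to keep $S$ real-valued and to ensure the identities above remain meaningful under the $0\cdot\infty=0$ convention.
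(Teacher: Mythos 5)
The paper does not prove Theorem~\ref{thm:savage} at all: it is quoted as a known result with citations to \citet{cid-sueiro.etal1999} and \citet{gneiting.raftery2007}, and no proof appears in the appendix. Your argument is the standard proof from those references and it is correct: taking $S(\pp)=-\inner{\pp}{l(\pp,\cdot)}$ and observing that properness is literally the subgradient inequality for the candidate subgradient $-l(\qq,\cdot)$ is exactly the right mechanism, the invariance under shifts by $a\onez$ is the right remark about non-uniqueness of subgradients on the simplex, and your supporting-hyperplane argument correctly upgrades strict properness to strict convexity (and conversely). The one place where your flagged ``boundary bookkeeping'' is genuinely load-bearing rather than routine is closedness of $S$: when $l(\qq,z)=\infty$ at points with $q_z=0$, the functions $\pp\mapsto-\inner{\pp}{l(\qq,\cdot)}$ are convex but not lower semicontinuous (they jump from $-\infty$ up to a finite value as $p_z\to 0$), so ``pointwise supremum of affine functions'' does not by itself deliver closedness; one must either restrict the supremum to $\qq$ in the relative interior of $\tldistr$ and argue that this does not change the value of $S$, or verify lower semicontinuity of $S$ on the relative boundary directly from regularity. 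With that caveat addressed, the proposal is a complete and correct proof of the cited theorem.
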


By using the definition of the subgradient, we can easily verify that the convex function $S$ in the theorem is the negative Bayes risk; that is,
\begin{align}
    S(\pp) = - \min_{\qq\in\tldistr} \expect_{z\sim\pp} \left[
        l(\qq, z)
    \right]
    \equiv -\underline{L}(\pp),
\end{align}
where $\underline{L}(\pp)$ is the Bayes risk.
Thus, Theorem~\ref{thm:savage} shows that a proper loss function is determined by its Bayes risk, up to the choice of $\subgrad S(\qq) \in \partial S(\qq)$ at points where $S$ is not differentiable~\citep{williamson.etal2016}.

Importantly, the sum of the second and third terms in Eq.~\eqref{eq:savage} is the convex conjugate $S^\ast(\subgrad S(\qq))$ of $S$~\cite{reid.etal2015}.
This fact leads to the ``dual'' of the Savage representation.
For a closed convex function $F$ whose domain is a convex subset $\domain$ of $\onezperp$, we define a function $\lambda_F: \domain\times\truelabels \to \real$ as
\begin{align}
    \lambda_F(\vv, z) = - v_z + F(\vv).
\end{align}
The following theorem shows that under a certain condition on $F$, $\lambda_F$ is essentially a proper loss for which $\domain$ parameterizes the probability simplex $\tldistr$.

\begin{theorem}\label{thm:dual-rep}
Let $l:\tldistr\times\truelabels \to \extreal$ be a regular loss. Then, it is proper if and only if there exists a closed convex function $F: \domain\subset\onezperp \to \real$ that satisfies the following conditions:
    \begin{enumerate}
        \item 
        $F(\vv) - \max_{z\in\truelabels} v_z$ is bounded from below.
        \item With $F^\ast(\pp)$ the convex conjugate of $F(\vv)$, it holds that
        $l(\qq, z) = \lambda_F(\subgrad F^\ast(\qq), z)$, where
        $\subgrad F^\ast(\pp)$ is an appropriately chosen subgradient function.
    \end{enumerate}
Furthermore, $F^\ast(\pp)$ at a point $\pp\in\tldistr$ is a negative Bayes risk for this loss.
\end{theorem}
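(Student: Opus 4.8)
The plan is to reduce both directions to the Savage representation (Theorem~\ref{thm:savage}) together with Fenchel--Young duality, exploiting the fact that the candidate function $F$ and the negative-Bayes-risk function $S$ are convex conjugates of one another. The starting point is the identity already noted before the theorem: for $\vv = \subgrad S(\qq) \in \partial S(\qq)$ one has $\inner{\qq}{\vv} - S(\qq) = S^\ast(\vv)$, which is precisely the equality case of Fenchel--Young. Hence the Savage form \eqref{eq:savage} rewrites as $l(\qq,z) = -[\subgrad S(\qq)]_z + S^\ast(\subgrad S(\qq)) = \lambda_{S^\ast}(\subgrad S(\qq), z)$, which already suggests taking $F = S^\ast$. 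A second preliminary observation is that, because every $\qq \in \tldistr$ satisfies $\inner{\onez}{\qq} = 1$, the loss in \eqref{eq:savage} is unchanged when $\subgrad S(\qq)$ is shifted by any multiple of $\onez$; we may therefore always select the subgradient in $\onezperp$, which explains why the dual variable $\vv$ lives in $\onezperp$ and why $\domain$ is taken there.

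For the \emph{only-if} direction, take a proper regular $l$ and let $S:\tldistr\to\real$ be the closed convex function supplied by Theorem~\ref{thm:savage}. Define $F$ as the restriction of $S^\ast$ to $\onezperp$; as a conjugate it is automatically closed and convex. To verify Condition~1, I would evaluate $F(\vv) = \sup_{\pp\in\tldistr}[\inner{\vv}{\pp} - S(\pp)]$ at the vertex $e_{z^\ast}$ of $\tldistr$ with $z^\ast\in\argmax_{z} v_z$: since $\inner{\vv}{e_{z^\ast}} = \max_{z\in\truelabels} v_z$, this gives $F(\vv) - \max_{z\in\truelabels} v_z \geq -S(e_{z^\ast}) \geq -\max_{z'} S(e_{z'})$, a finite constant because $S$ is real-valued on the compact simplex. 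For Condition~2, closedness of $S$ gives $F^\ast = S^{\ast\ast} = S$, so choosing the subgradient selection $\subgrad F^\ast := \subgrad S$ (valued in $\onezperp$) yields $\lambda_F(\subgrad F^\ast(\qq), z) = -[\subgrad S(\qq)]_z + S^\ast(\subgrad S(\qq)) = l(\qq,z)$ by the opening identity. The ``furthermore'' claim is then immediate, since $F^\ast = S = -\underline{L}$.

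For the \emph{if} direction, suppose $F$ satisfies Conditions~1 and~2 and set $S := F^\ast|_{\tldistr}$. Here Condition~1 does the essential work: for $\pp\in\tldistr$ we have $\inner{\vv}{\pp} \leq \max_{z\in\truelabels} v_z$, so $\inner{\vv}{\pp} - F(\vv) \leq \max_{z\in\truelabels} v_z - F(\vv)$, which Condition~1 bounds above uniformly in $\vv$; combined with the trivial lower bound obtained from any single fixed $\vv\in\domain$, this shows $S = F^\ast$ is finite on $\tldistr$, i.e.\ regular, and it is closed and convex as a conjugate. Finally, for $\vv = \subgrad F^\ast(\qq) \in \partial F^\ast(\qq)$ the Fenchel--Young equality (valid because $F$ is closed) gives $F(\vv) = \inner{\vv}{\qq} - F^\ast(\qq) = \inner{\vv}{\qq} - S(\qq)$, so Condition~2 becomes $l(\qq,z) = -v_z + F(\vv) = -[\subgrad S(\qq)]_z + \inner{\qq}{\subgrad S(\qq)} - S(\qq)$, which is exactly the Savage form; hence $l$ is proper by Theorem~\ref{thm:savage}.

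The main obstacle is the bookkeeping at the lower-dimensional, non-smooth boundary of the simplex. Because $\tldistr$ lives in the hyperplane $\inner{\onez}{\cdot}=1$, conjugates and subgradients are determined only modulo the $\onez$-direction, so one must consistently pin them to $\onezperp$ and check that the selected subgradient $\subgrad F^\ast(\qq)$ actually lands in $\domain$; this follows from the equivalence $\vv\in\partial F^\ast(\qq) \Leftrightarrow \qq\in\partial F(\vv)$, the latter forcing $\vv$ into the domain of $F$. More delicate is the treatment of regular losses that take the value $\infty$ on $\{q_z = 0\}$: there the corresponding component $v_z$ of $\subgrad F^\ast(\qq)$ diverges, so $\subgrad F^\ast$ must be allowed to take values in $\extreal^{\truelabels}$ and the Fenchel--Young equality applied in a limiting sense. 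Confirming that Condition~1 is exactly the hypothesis guaranteeing $F^\ast$ is finite (not merely proper) on all of $\tldistr$, including these boundary points, is the crux that makes the two representations equivalent.
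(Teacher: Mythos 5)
Your proof is correct and takes essentially the same route as the paper's: both directions reduce to the Savage representation via the Fenchel--Young equality, with $F$ taken to be the restriction of $S^\ast$ to $\onezperp$, and your direct vertex-evaluation argument for Condition~1 is exactly the content of the paper's Lemma~\ref{lemma:finite-S(q)} (proved there via Proposition~\ref{prop:equiv} with $R=\idz$). The bookkeeping issues you flag at the end (pinning subgradients to $\onezperp$, $\partial F^\ast(\qq)\subset\partial S(\qq)$ for the restriction, and extended-real-valued subgradients at the simplex boundary) are precisely the points the paper's proof handles explicitly.
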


A full proof of this theorem is presented in Appendix~\ref{prf:dual-rep}.
In Appendix~\ref{apdx:strictly-proper}, we also derive conditions on $F$ under which the associated proper loss is strictly proper; however, we do not use them in the following discussion.
Theorem~\ref{thm:dual-rep} elucidates that $F$ in the proved representation is closely related to the convex conjugate of the negative Bayes risk $-\underline{L}$.
Therefore, in the rest of the paper, the representation of a proper loss given in Condition~2 is called the dual representation.

Here, we contrast our Theorem~\ref{thm:dual-rep} with related results.
Indeed, a representation of proper losses that uses $\lambda_F(\vv,z)$ is not new.
\citet{reid.etal2015} showed that proper losses can be written with $\underline{L}^\ast$. 
\citet{vanrooyen.williamson2018} also showed with different proof techniques that any proper loss has the form of Condition~2 in Theorem~\ref{thm:dual-rep}.
In a more general context, \citet{nowak-vila.etal2019} and \citet{blondel.etal2020} discussed loss functions for structured prediction and arrived at the same representation.
There is another line of research on the related notions of matching losses~\cite{kivinen.warmuth1997} and the Bregman divergence~\cite{bregman1967,banerjee.etal2005}, which are the special case of proper losses that have strictly convex and continuously differentiable Bayes risks.
In particular, \citet{amid.etal2019} proved that matching losses have the dual representation.
However, none of those previous studies obtained Condition~1 in Theorem~\ref{thm:dual-rep}, and therefore, they only succeeded in proving the necessity of the dual representation.
In contrast, Theorem~\ref{thm:dual-rep} gives necessary and sufficient conditions for a loss to be proper, which is made possible by constraining the convex functions by Condition~1.
The theorem is also applicable to general proper losses that may possibly have non-smooth or not strictly convex Bayes risks.

Consider a proper loss $l(\pp, z) = \lambda_F(\subgrad F^\ast(\pp), z)$.
If $\subgrad F^\ast(\pp)$ is invertible on $\tldistr$, then $\lambda_F$ can be regarded as a composite proper loss with a link function $\subgrad F^\ast (\pp)$~\citep{williamson.etal2016}. In this case, we can use a model that outputs a value on $\domain$ instead of the class posterior probabilities.

This approach has practical advantages. Given $\vv\in\domain$, a loss is just $\lambda_F(\vv, z)$ and is always guaranteed to be convex as a function of $\vv\in\domain$. This may facilitate optimization. In addition, once the best prediction $\hat\vv$ is obtained, it can be converted into class probabilities by using $\pp \in \partial F(\hat\vv)$. That is, we can completely circumvent calculation of the convex conjugate $F^\ast$, which may not be straightforward in general. Because a subdifferential map $\partial F^\ast$ of a strictly convex function $F^\ast$ is injective (see Appendix~\ref{apdx:strictly-proper}), it follows that $\subgrad F^\ast(\pp)$ is invertible if $F^\ast(\pp)$ is strictly convex, or equivalently, if $l(\pp,z)$ is strictly proper.

Note that $F^\ast$ and $-\underline{L}$ are different in a subtle way, though they are closely related:
the Bayes risk is defined only on $\tldistr$, while $F^\ast$ has a larger domain.
For example, $F^\ast$ might be finite at points in $\aff \tldistr \setminus \tldistr$, where $\aff \tldistr$ represents the affine hull of $\tldistr$.
It also holds that $F^\ast(\pp) = F^\ast (\pp+t\onez)$ for all $p\in\tldistr$ and $t \in \real$, but $\pp + t\onez$ is not in $\tldistr$ if $t\neq 0$.

This might lead us to suspect that minimizing $\lambda_F(\vv, z)$ can result in a solution that does not correspond to posterior probabilities in $\tldistr$. Indeed, even if $F$ satisfies the conditions in Theorem~\ref{thm:dual-rep}, there might be a point $\vv\in\domain$ for which any solution of $\subgrad F^\ast(\pp) = \vv$ does not belong to $\tldistr$. This is because the theorem guarantees the convex conjugate of $F$ to be well-defined in $\tldistr$ but also allows it to exist outside $\tldistr$. However, the following proposition, which is proved in Appendix~\ref{prf:min-in-domain}, guarantees that minimizers of the loss always correspond to some point in $\tldistr$.
\begin{proposition}\label{prop:min-in-domain}
Let $F: \domain \to \real$ be a convex function that satisfies the conditions in Theorem~\ref{thm:dual-rep}.
Then, any minimizer $\vv$ of 
$
    \expect_{z\sim\pp} \left[
        \lambda_F(\vv, z)
    \right]
    = \sum_{z\in\truelabels} p_z \lambda_F(\vv, z)
$, if one exists,
satisfies $\subgrad F^\ast(\qq) = \vv$ for some $\qq \in \tldistr$, where $F^\ast$ is the convex conjugate of $F$.
\end{proposition}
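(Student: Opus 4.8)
The plan is to reduce the statement to the Fenchel--Young equality linking the subgradients of $F$ and $F^\ast$. First I would substitute the definition $\lambda_F(\vv,z) = -v_z + F(\vv)$ into the conditional risk and use $\sum_{z\in\truelabels} p_z = 1$ to collapse it into the single expression $\sum_{z\in\truelabels} p_z \lambda_F(\vv,z) = F(\vv) - \inner{\pp}{\vv}$. This is the key simplification: up to sign, the expected loss is just the Fenchel coupling between the variable $\vv$ and the fixed distribution $\pp\in\tldistr$, so minimizing it is exactly the operation that the convex conjugate encodes.

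Next I would apply Fermat's rule to the minimization of $\vv\mapsto F(\vv) - \inner{\pp}{\vv}$. Extending $F$ by $+\infty$ outside $\domain$ so that it is a closed convex function on all of $\real^{\truelabels}$, a point $\vv$ is a minimizer if and only if $0 \in \partial[F(\vv) - \inner{\pp}{\vv}] = \partial F(\vv) - \pp$, i.e.\ $\pp \in \partial F(\vv)$; existence of a minimizer (the ``if one exists'' hypothesis) guarantees this set is nonempty. By the standard conjugate-duality equivalence for closed convex functions, $\pp \in \partial F(\vv)$ is equivalent to $\vv \in \partial F^\ast(\pp)$. Since $\pp$ is itself a point of $\tldistr$, taking $\qq = \pp$ immediately yields $\subgrad F^\ast(\qq) = \vv$ for some $\qq\in\tldistr$, which is the claim.

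The one point requiring care---and the main obstacle---is the interaction with the constraint $\domain \subset \onezperp$. Because $F$ lives on the hyperplane $\onezperp$ while $\pp$ lies in $\tldistr$ (so $\inner{\pp}{\onez}=1\neq 0$), the in-hyperplane optimality condition naively produces only the projection $\pp - \frac{1}{\numtls}\onez$ of $\pp$ onto $\onezperp$, which is not a point of $\tldistr$. I would resolve this by invoking the translation-invariance already recorded in the text, namely $F^\ast(\pp) = F^\ast(\pp + t\onez)$ for all $t$, which forces the subgradients to lie in $\onezperp$ and gives $\partial F^\ast(\pp) = \partial F^\ast(\pp - t\onez)$. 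Equivalently, the subdifferential of the extended $F$ at a point of $\onezperp$ is invariant under adding multiples of $\onez$, so the full-space condition $\pp \in \partial F(\vv)$ and the in-hyperplane condition differ only by the $\onez$-component. This is exactly what licenses $\pp\in\tldistr$ (rather than its projection) as the witness $\qq$, and I would state this invariance explicitly before closing the argument.
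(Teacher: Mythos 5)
Your proposal is correct and follows essentially the same route as the paper: both identify the expected loss with $F(\vv)-\inner{\pp}{\vv}$, recognize that its minimum equals $-F^\ast(\pp)$, and conclude $\vv\in\partial F^\ast(\pp)$ via the Fenchel--Young equality, taking $\qq=\pp\in\tldistr$ as the witness. Your extra paragraph on the $\onezperp$ constraint is a careful elaboration of a point the paper leaves implicit (the translation-invariance $F^\ast(\pp)=F^\ast(\pp+t\onez)$ already noted in the text), not a different argument.
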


\section{\texorpdfstring{Characterization of $T$-Proper Losses}{Characterization of T-Proper Losses}}
\label{sec:unbounded-T-proper}

In this section, we characterize $T$-proper losses, which may possibly be lower-unbounded.
Our main theorem here is closely related to backward correction in that it involves inversion of a label-corruption process.
However, because our result gives necessary and sufficient conditions for $T$-properness, it also holds for forward-corrected losses and any other $T$-proper losses.

For a closed convex function $F:\domain\subset \onezperp \to \real$ and a reconstruction matrix $R$ for weak labels $\weaklabels$, we define a function $\lambda_{F,R}: \domain\times\weaklabels \to \real$ as
\begin{align}
    \lambda_{F,R}(\vv, y) = - (R\T \vv)_y + F(\vv).
\end{align}
Then we can state the main theorem of this section as follows:

\begin{theorem} \label{thm:T-proper-dual-rep}
Let $T$ be a label transition matrix for weak labels $\weaklabels$, and let $l_\mathrm{W}: \tldistr\times\weaklabels \to \extreal$ be a weak-label loss.
Then, $l_\mathrm{W}$ is $T$-proper if and only if there exist a closed convex function $F:\domain\subset\onezperp \to \real$, a reconstruction matrix $R$ of $T$, and a function $\vec{\Delta}(\qq)$ taking values on the cokernel\footnote{The cokernel of $T$ is the kernel of $T\T$, i.e., a set of vectors $\vv$ in $\real^{\weaklabels}$ such that $T\T\vv =\vec{0}$.} of $T$, which satisfy the following conditions:
\begin{enumerate}
    \item $F(\vv) - \max_{z\in\truelabels} v_z$ is bounded from below.
    \item It holds that $l_\mathrm{W}(\qq, y) = {\lambda}_{F,R}(\subgrad F^\ast(\qq), y) + {\Delta}_y(\qq)$, where $\subgrad F^\ast(\qq)$ is an appropriately chosen subgradient function.
\end{enumerate}
\end{theorem}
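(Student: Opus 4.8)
The plan is to reduce $T$-properness of the weak-label loss $l_\mathrm{W}$ to ordinary properness of an induced true-label loss, and then invoke Theorem~\ref{thm:dual-rep}. Define the induced loss $\tilde{l}:\tldistr\times\truelabels\to\extreal$ by $\tilde{l}(\qq,z) \equiv (T\T l_\mathrm{W}(\qq,\cdot))_z = \sum_{y\in\weaklabels} T_{yz}\, l_\mathrm{W}(\qq,y)$. The first step is the identity
\begin{align}
    \expect_{y\sim T\pp}[l_\mathrm{W}(\qq,y)]
    &= \sum_{y\in\weaklabels} (T\pp)_y\, l_\mathrm{W}(\qq,y) \nonumber\\
    &= \sum_{z\in\truelabels} p_z\, \tilde{l}(\qq,z)
    = \expect_{z\sim\pp}[\tilde{l}(\qq,z)],
\end{align}
obtained by swapping the order of summation and using $(T\pp)_y = \sum_z T_{yz} p_z$. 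Because this holds for every $\pp,\qq\in\tldistr$, the defining inequality~\eqref{eq:def-T-proper} of $T$-properness is equivalent, pointwise in $(\pp,\qq)$, to the properness inequality~\eqref{eq:def-proper} for $\tilde{l}$. Hence $l_\mathrm{W}$ is $T$-proper if and only if $\tilde{l}$ is proper.

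For the ``only if'' direction, assuming $\tilde{l}$ is regular I would apply Theorem~\ref{thm:dual-rep} to the proper loss $\tilde{l}$, obtaining a closed convex $F$ that satisfies Condition~1 and for which $\tilde{l}(\qq,z) = \lambda_F(\vv,z) = -v_z + F(\vv)$ with $\vv \equiv \subgrad F^\ast(\qq)$. In vector form this reads $T\T l_\mathrm{W}(\qq,\cdot) = -\vv + F(\vv)\onez$. The next step is to invert this relation for $l_\mathrm{W}(\qq,\cdot)$. Since $RT=\idz$ gives $T\T R\T = \idz$, the operator $T\T$ is surjective and $R\T$ furnishes a particular solution, so every solution has the form
\begin{align}
    l_\mathrm{W}(\qq,\cdot) &= R\T\bigl(-\vv + F(\vv)\onez\bigr) + \vec{\Delta}_0(\qq) \nonumber\\
    &= -R\T\vv + F(\vv)\,R\T\onez + \vec{\Delta}_0(\qq),
\end{align}
for some $\vec{\Delta}_0(\qq)\in\ker(T\T)$, the cokernel of $T$.

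The one genuinely nonroutine step, and the place I expect to spend the most care, is turning $F(\vv)\,R\T\onez$ into the clean term $F(\vv)$ demanded by $\lambda_{F,R}$. The key observation is that $R\T\onez - \oney$ lies in the cokernel of $T$: using column-stochasticity of $T$ (each column sums to one, i.e.\ $T\T\oney = \onez$) together with $T\T R\T = \idz$, we get $T\T(R\T\onez - \oney) = \onez - \onez = \vec{0}$. Consequently $F(\vv)(R\T\onez - \oney)\in\ker(T\T)$ for each fixed $\qq$, and absorbing it by setting $\vec{\Delta}(\qq) = F(\vv)(R\T\onez-\oney) + \vec{\Delta}_0(\qq)$ (still cokernel-valued, as $\ker(T\T)$ is a subspace) yields exactly $l_\mathrm{W}(\qq,y) = -(R\T\vv)_y + F(\vv) + \Delta_y(\qq) = \lambda_{F,R}(\vv,y) + \Delta_y(\qq)$, which is Condition~2. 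For the converse I would reverse these manipulations: starting from the claimed representation, apply $T\T$ and use $T\T R\T=\idz$, $T\T\oney=\onez$, and $T\T\vec{\Delta}(\qq)=\vec{0}$ to recover $T\T l_\mathrm{W}(\qq,\cdot) = -\vv + F(\vv)\onez = \lambda_F(\vv,\cdot)$, so that $\tilde{l}$ is the proper loss associated with $F$ by the sufficiency part of Theorem~\ref{thm:dual-rep}; the equivalence from the first step then gives $T$-properness of $l_\mathrm{W}$. The remaining technicalities to verify are the regularity of $\tilde{l}$ needed to invoke Theorem~\ref{thm:dual-rep} and the consistency of the single subgradient selection $\vv=\subgrad F^\ast(\qq)$ shared between $\lambda_F$ and $\lambda_{F,R}$.
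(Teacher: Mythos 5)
Your proposal is correct and follows essentially the same route as the paper's proof: reduce $T$-properness to properness of the induced loss $\sum_{y}T_{yz}l_\mathrm{W}(\qq,y)$ (the paper's Lemma~\ref{thm:equiv-loss}), apply Theorem~\ref{thm:dual-rep}, and invert the resulting linear relation through $R\T$ up to an element of $\coker T$. The only cosmetic difference is bookkeeping of the $F(\vv)$ term: the paper first moves $F(\vv)\oney$ to the other side using $T\T\oney=\onez$ before inverting, whereas you invert directly and then absorb $F(\vv)(R\T\onez-\oney)\in\coker T$ into $\vec{\Delta}(\qq)$ --- both are valid, and your explicit attention to the regularity hypothesis needed to invoke Theorem~\ref{thm:dual-rep} is, if anything, more careful than the paper's.
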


See Appendix~\ref{prf:T-dual} for a proof.

Because of the assumption of reconstructibility, we have that $\vec{\Delta}(\qq) \equiv \vec{0}$ if $|\truelabels| = |\weaklabels|$.
On the other hand, if $|\truelabels| < |\weaklabels|$, a label transition matrix $T$ has a cokernel of nonzero dimension, and therefore, $\vec{\Delta}(\qq)$ might take finite values.
However, even if $\vec{\Delta}(\qq)\neq 0$ for some $\qq$, by the definition of $\coker T$, we have that $\inner{T\pp}{\vec{\Delta}(\qq)}=0$ for all $\pp, \qq \in \tldistr$, which leads to the following proposition:
\begin{proposition}
The function $\vec{\Delta}(\qq)$ in Theorem~\ref{thm:T-proper-dual-rep} does not contribute to the expected loss; that is, $\expect_{y\sim T\pp} [\Delta_y(\qq)] = 0$ for all $\pp, \qq\in\tldistr$.
In particular, it holds that $\vec{\Delta}(\qq) \equiv \vec{0}$ if $\card{\truelabels}=\card{\weaklabels}$.
\end{proposition}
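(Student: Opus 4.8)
The plan is to recast the claimed expected value as an inner product in $\real^{\weaklabels}$ and then invoke the defining property of $\coker T$. First I would write
\begin{align}
    \expect_{y\sim T\pp}[\Delta_y(\qq)]
    = \sum_{y\in\weaklabels} (T\pp)_y\, \Delta_y(\qq)
    = \inner{T\pp}{\vec{\Delta}(\qq)},
\end{align}
which is legitimate because $T\pp\in\wldistr$ is precisely the distribution governing $y$, and the expectation is a finite sum over $\weaklabels$. Note that $\vec{\Delta}(\qq)$ is finite-valued, since it lives in a linear subspace of $\real^{\weaklabels}$, so the sum is well-defined and no $\infty-\infty$ indeterminacy can occur.

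Next, the key step is to transfer $T$ across the inner product onto its transpose. Using the adjoint identity $\inner{T\pp}{\vec{\Delta}(\qq)} = \inner{\pp}{T\T\vec{\Delta}(\qq)}$ together with the assumption from Theorem~\ref{thm:T-proper-dual-rep} that $\vec{\Delta}(\qq)$ takes values in $\coker T = \ker T\T$, I obtain $T\T\vec{\Delta}(\qq)=\vec{0}$, and hence $\inner{\pp}{T\T\vec{\Delta}(\qq)} = \inner{\pp}{\vec{0}} = 0$ for every $\pp\in\tldistr$. This establishes the first claim: $\vec{\Delta}(\qq)$ makes no contribution to the expected loss, for all $\pp,\qq\in\tldistr$.

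For the second claim I would argue that the cokernel itself collapses to $\{\vec{0}\}$ when $\card{\truelabels}=\card{\weaklabels}$. In that case $T$ is a square matrix admitting a left inverse $R$ with $RT=\idz$; a square matrix possessing a left inverse is invertible, so both $T$ and $T\T$ are nonsingular. Consequently $\coker T = \ker T\T = \{\vec{0}\}$, which forces $\vec{\Delta}(\qq)=\vec{0}$ for all $\qq\in\tldistr$, as asserted.

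The argument reduces to a single adjoint manipulation, so there is no substantive obstacle. The only points requiring mild care are verifying the transpose identity $\inner{T\pp}{\vv}=\inner{\pp}{T\T\vv}$ under the index conventions of $\real^{\truelabels}$ and $\real^{\weaklabels}$, and, for the square case, confirming the elementary linear-algebra fact that left-invertibility of a square matrix entails full invertibility.
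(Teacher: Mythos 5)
Your proposal is correct and follows essentially the same route as the paper: the text preceding the proposition derives $\expect_{y\sim T\pp}[\Delta_y(\qq)]=\inner{T\pp}{\vec{\Delta}(\qq)}=\inner{\pp}{T\T\vec{\Delta}(\qq)}=0$ directly from the definition of $\coker T=\ker T\T$, and attributes the vanishing of $\vec{\Delta}$ in the square case to reconstructibility forcing the cokernel to be trivial. Your added remarks on finiteness of $\vec{\Delta}(\qq)$ and on left-invertibility of a square matrix implying invertibility are harmless elaborations of the same argument.
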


Two well-known classes of $T$-proper losses are forward and backward correction losses.
Because Theorem~\ref{thm:T-proper-dual-rep} is applicable to any $T$-proper loss, the loss functions of these classes also conform to it. We demonstrate this in the following two examples.

\begin{example}[Forward correction]
Let $l_\weaklabels: \wldistr\times\weaklabels \to \extreal$ be a proper loss for estimating weak-label posterior probabilities.
Note the difference from a weak-label loss $l_\mathrm{W}: \tldistr\times\weaklabels\to\extreal$ and a proper loss $l:\tldistr\times\truelabels \to \extreal$ for supervised learning.
A weak-label loss $l_\mathrm{W}: \tldistr\times\weaklabels\to \extreal$ is called the forward correction of $l_\weaklabels$ if $l_\mathrm{W}(\qq, y) = l_\weaklabels(T\qq, y)$.
Its $T$-properness is a consequence of the properness of $l_\weaklabels$ and the reconstructibility of $T$.
In Appendix~\ref{apdx:forwardcorrection}, we prove that forward correction losses conform to Theorem~\ref{thm:T-proper-dual-rep}.
\qed
\end{example}

\begin{example}[Backward correction]
Let $l:\tldistr\times\truelabels \to \extreal$ be a proper loss for fully supervised learning.
A backward-corrected loss $l_\mathrm{W}:\tldistr\times\weaklabels \to \extreal$ associated with $l$ is defined as $l_\mathrm{W}(\qq, y) = \sum_{z\in\truelabels} R_{zy} l(\qq, z)$.
By applying Theorem~\ref{thm:dual-rep} to $l(\qq,z)$, we find that $l_\mathrm{W}(\qq, y) = - [R\T \subgrad F^\ast(\qq)]_y + F(\subgrad F^\ast(\qq) ) (R\T \onez)_y$.
It can be shown that $R\T \onez-\oney \in \coker T$ (see Appendix~\ref{apdx:R} for a proof).
Therefore, the backward-corrected loss has the form given in Theorem~\ref{thm:T-proper-dual-rep} with $\vec{\Delta}(\qq)=F(\subgrad F^\ast(\qq) ) (R\T \onez-\oney)$.
For any label transition matrix $T$, we can choose a reconstruction matrix $R$ such that $R\T \onez = \oney$ (see Appendix~C.6 in \citet{vanrooyen.williamson2018} and Appendix~\ref{apdx:R} in this paper); therefore, we can always make $\vec{\Delta}(\qq)$ zero for a backward-corrected loss by using an appropriate $R$.
\qed
\end{example}

\section{\texorpdfstring{Lower-Boundedness of $T$-Proper Losses}{Lower-Boundedness of T-Proper Losses}}

$T$-proper losses as constructed in Theorem~\ref{thm:T-proper-dual-rep} may not be bounded from below. Indeed, there is a gap between the boundedness criteria for proper losses and $T$-proper losses.
In Section~\ref{sec:unbounded}, we see this for an example of the softmax cross-entropy loss. In Section~\ref{sec:suff-cond}, we give a sufficient condition under which a $T$-proper loss is bounded from below.

\subsection{\texorpdfstring{$T$-Proper Loss May Not Be Bounded from Below}{T-Proper Loss May Not Be Bounded from Below}}\label{sec:unbounded}

Consider a $T$-proper weak-label loss $l_\mathrm{W}(\qq, y) = \lambda_{F,R}(\subgrad F^\ast(\qq), y)$ with $\vec{\Delta}(\qq) =\vec{0}$.
To see if $\lambda_{F,R}(\vv, y)$ is bounded from below, we need to compare $F(\vv)$ with $R\T \vv$.
On the other hand, any regular proper loss is bounded from below, because the definition of regularity requires that the loss must not be negative infinity on the compact probability simplex.
This is also reflected in Condition~1 of Theorem~\ref{thm:T-proper-dual-rep}, which suffices to ensure the lower-boundedness of a loss of the form $- v_z +F(\vv)$.
The following lemma implies that the boundedness of $T$-proper losses imposes a stronger restriction on $F(\vv)$ than that of proper losses.

\begin{lemma}\label{lemma:max-Rv-max-v}
Let $R$ be a reconstruction matrix. Then
$\max_{y\in\weaklabels} ( R\T \vv )_y \geq \max_{z\in\truelabels} v_z $ for any vector $\vv\in\onezperp$.
\end{lemma}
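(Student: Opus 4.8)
The plan is to transpose the reconstruction identity and read off a convexity structure. Since $R$ is a reconstruction matrix, $RT=\idz$, and taking transposes gives $T\T R\T = \idz$. Consequently, for any $\vv\in\onezperp$ (indeed for any $\vv\in\real^{\truelabels}$), applying $T\T$ to the vector $R\T\vv$ recovers $\vv$ itself, component by component:
\begin{align}
    v_z = (T\T R\T \vv)_z = \sum_{y\in\weaklabels} T_{yz}\,(R\T\vv)_y
    \qquad \text{for every } z\in\truelabels.
\end{align}

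The second step is to exploit that each column of $T$ is a probability distribution. By definition $T_{yz}=p(y\vert z)\geq 0$, and $\sum_{y\in\weaklabels} T_{yz}=1$ because $p(\cdot\vert z)$ is a conditional distribution over $\weaklabels$. Hence the display above exhibits each $v_z$ as a convex combination of the numbers $\{(R\T\vv)_y\}_{y\in\weaklabels}$, and a convex combination never exceeds the largest of the combined quantities. Therefore $v_z\leq\max_{y\in\weaklabels}(R\T\vv)_y$ for every $z\in\truelabels$, and taking the maximum over $z$ yields $\max_{z\in\truelabels} v_z\leq\max_{y\in\weaklabels}(R\T\vv)_y$, which is exactly the claim.

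I do not expect a genuine obstacle: the entire argument rests on the single observation that $T\T R\T=\idz$ rewrites each $v_z$ as an average of the entries of $R\T\vv$ with weights given by the stochastic column $T_{\cdot z}$. The only point worth verifying explicitly is the column-stochasticity of $T$, which is immediate from $T_{yz}=p(y\vert z)$. I would also remark that the stated hypothesis $\vv\in\onezperp$ is not actually needed; the inequality holds verbatim for all $\vv\in\real^{\truelabels}$, the restriction presumably being included only because $\vv$ lives in $\onezperp$ in the intended application.
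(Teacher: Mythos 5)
Your proof is correct and is essentially identical to the paper's own argument: both write $\vv = T\T(R\T\vv)$ from $RT=\idz$ and use the column-stochasticity of $T$ (nonnegativity plus $T\T\oney=\onez$) to see each $v_z$ as a convex combination of the entries of $R\T\vv$. Your side remark that the hypothesis $\vv\in\onezperp$ is not needed is also accurate.
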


See Appendix~\ref{prf:maxRv} for a proof.

\begin{figure}
    \vskip0.2in
    \begin{minipage}{0.45\hsize}
        \centering
        \includegraphics[width=\hsize]{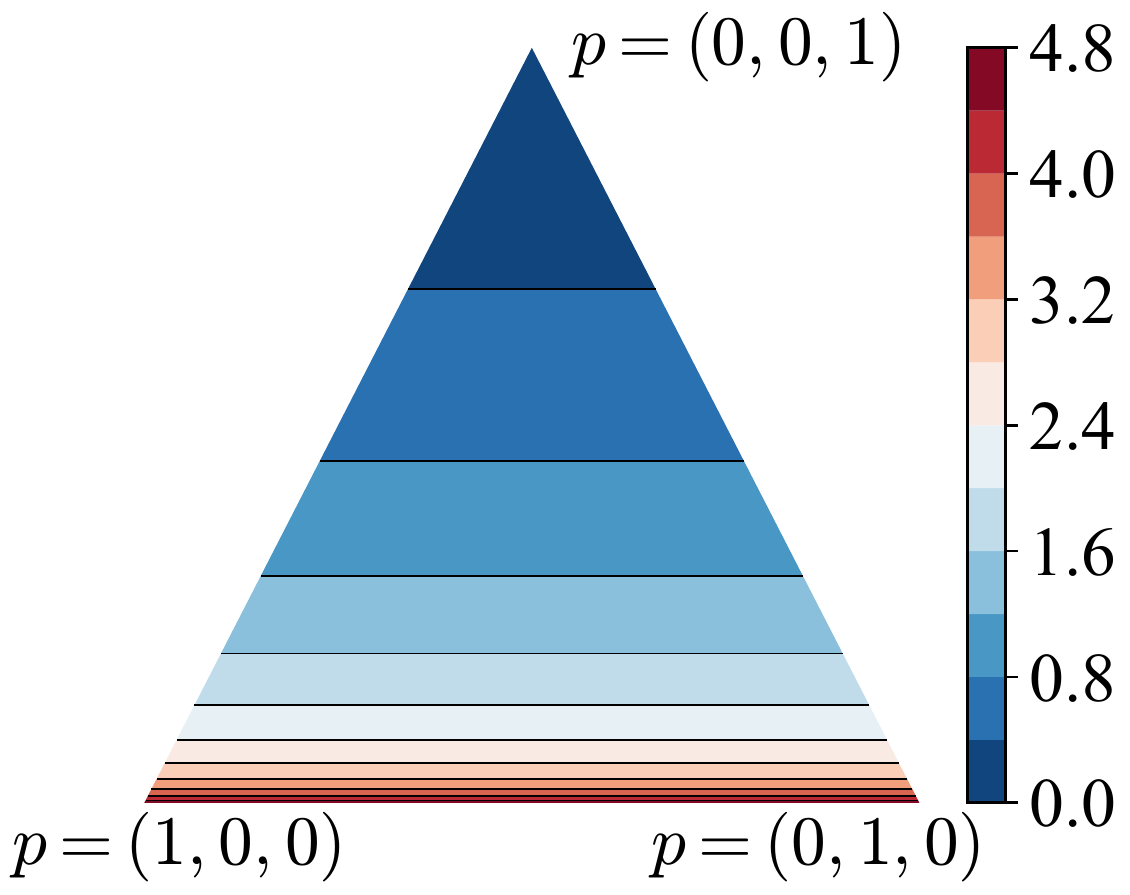}
        \subcaption{$y=(0, 0, 1)$}
    \end{minipage} \hfill
    \begin{minipage}{0.45\hsize}
        \centering
        \includegraphics[width=\hsize]{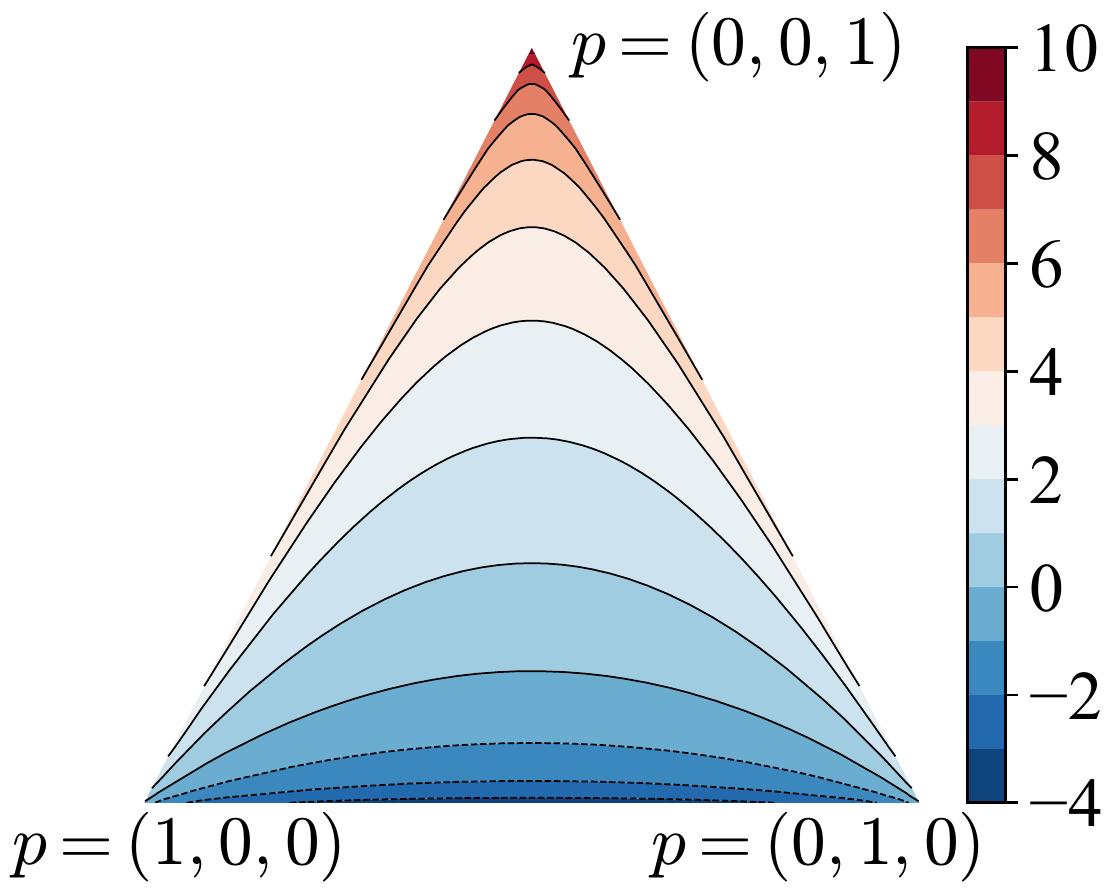}
        \subcaption{$y = (1, 1, 0)$}
    \end{minipage} \\[3mm]
    \begin{minipage}{0.45\hsize}
        \centering
        \includegraphics[width=\hsize]{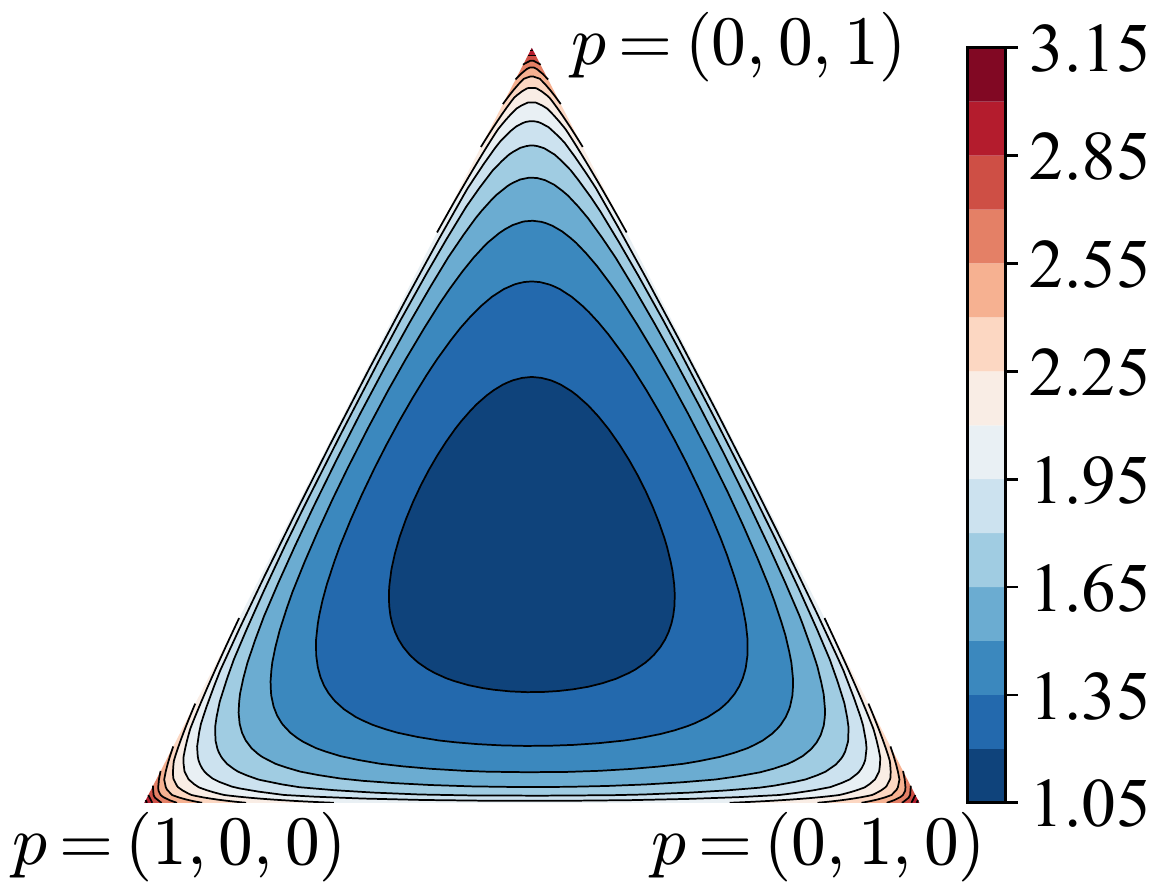}
        \subcaption{$y=(1,1,1)$}
    \end{minipage} \hfill
    \begin{minipage}{0.45\hsize}
        \centering
        \includegraphics[width=\hsize]{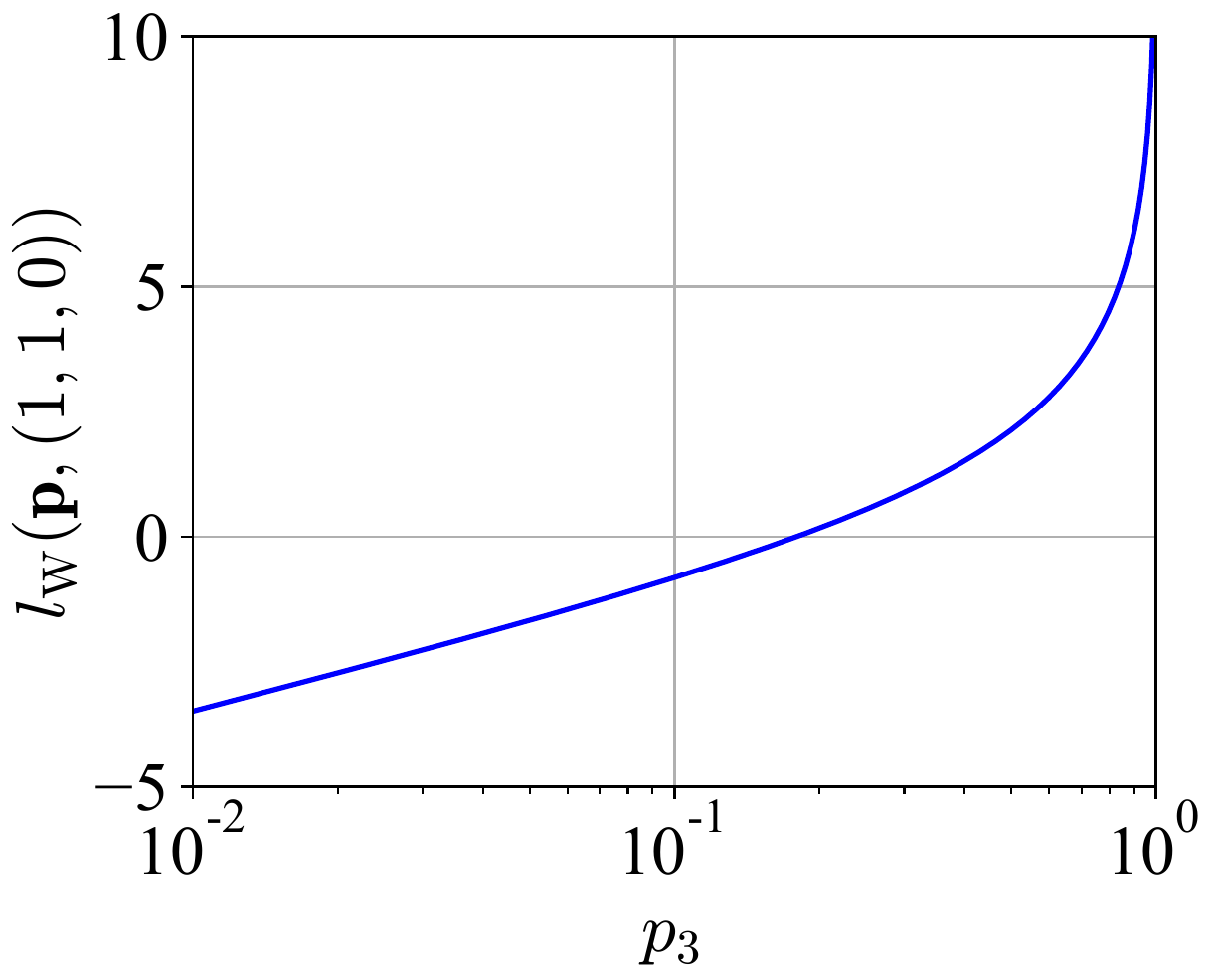}
        \subcaption{$y = (1, 1, 0), \  p_1=p_2=(1-p_3)/2$}
    \end{minipage}
    \caption{Weak-label loss $l_\mathrm{W}(\pp, y)$ with $F(\vv)=\sum_{z\in\truelabels} \log(e^{v_z})$ for partial labels (Example~\ref{ex:unbounded}). (a--c) $l_\mathrm{W}(\pp, y)$ for all $\pp\in\tldistr$. By symmetry, the plots for weak labels $y$ that are not shown here can be obtained by rotating one of these plots. (d) $l_\mathrm{W}(\pp, (1,1,0))$ for $\pp\in\tldistr$ such that $p_1 = p_2 = (1-p_3)/2$.}
    \label{fig:unbounded}
\end{figure}
\begin{example}\label{ex:unbounded}
Consider $F(\vv)=\log (\sum_{z\in\truelabels} e^{v_z})$, which corresponds to the softmax cross-entropy loss and satisfies $F(\vv) > \max_{z\in\truelabels} v_z$ for all $\vv\in\truelabels$.
Also, up to an exponentially small correction, it holds that $F(t\vv) \simeq t\max_{z\in\truelabels} v_z$ for $\vv\neq\vec{0}$ and large positive $t$.
This fact and Lemma~\ref{lemma:max-Rv-max-v} imply that $\lambda_{F,R} (t\vv, y) \simeq t [- ( R\T \vv )_y  + \max_{z\in\truelabels} v_z] \leq 0$ for $y\in\argmax_{y\in\weaklabels} ( R\T \vv )_y$.
If we choose $\vec{v}$ such that this inequality is strict, this diverges to negative infinity as $t\to\infty$.
Therefore, the weak-label loss constructed by applying Theorem~\ref{thm:T-proper-dual-rep} to this $F(\vv)$ with $\vec{\Delta}(\qq)=\vec{0}$ is not bounded from below.
To provide a concrete example, we examine partial labels as described in Example~\ref{ex:partial-label}. Here, we take a reconstruction matrix
\begin{align}
    R = \begin{pmatrix}
    1 & 0 & 0 & \frac{3-2p}{3(1-p)} & \frac{3-2p}{3(1-p)} & - \frac{3-p}{3(1-p)} & \frac{1}{3} \\
    0 & 1 & 0 & \frac{3-2p}{3(1-p)} & - \frac{3-p}{3(1-p)} & \frac{3-2p}{3(1-p)} & \frac{1}{3} \\
    0 & 0 & 1 & - \frac{3-p}{3(1-p)} & \frac{3-2p}{3(1-p)} & \frac{3-2p}{3(1-p)} & \frac{1}{3}
    \end{pmatrix},
\end{align}
and we set $p=0.1$.
Figures~\ref{fig:unbounded}(a--c) show contour plots of $l_\mathrm{W}(\pp, y)$ for all $\pp\in\tldistr$ and $y=(0,0,1),(1,1,0)$, and $(1,1,1)$.
We can just rotate these plots to find the plots for the other weak labels.
Among these, $l_\mathrm{W}(\pp, (1,1,0))$ is not bounded from below.
To make the divergence clearer, Fig.~\ref{fig:unbounded}(d) shows the same function on the line satisfying $p_1=p_2$.
The plot suggests that the loss indeed diverges logarithmically to negative infinity, or equivalently, it diverges linearly in the logit, which is consistent with the above discussion.
\qed
\end{example}

\subsection{Sufficient Condition for Lower-Boundedness}
\label{sec:suff-cond}

Now, we are ready to state a sufficient condition for $T$-proper losses to be bounded from below.
Lemma~\ref{lemma:max-Rv-max-v} implies that if $F(\vv) - \max_{y\in\weaklabels} (R\T \vv)_y$ has a lower bound on $\domain$, then Condition~1 in Theorem~\ref{thm:T-proper-dual-rep} is automatically satisfied. Therefore, we have the following theorem.

\begin{theorem} \label{thm:positive-T-proper}
Let $T$ be a label transition matrix for weak labels $\weaklabels$, and let $F:\domain\subset\onezperp \to \real$ be a closed convex function. If $F(\vv) - \max_{y\in \weaklabels} (R\T \vv)_y$ is bounded from below in $\domain$, then a weak-label loss $l_\mathrm{W}(\qq, y) = \lambda_{F,R}(\subgrad F^\ast(\qq), y)$ is $T$-proper and lower-bounded, where $R$ is a reconstruction matrix of $T$,  $\subgrad F^\ast(\qq)$ is a subgradient function of the convex conjugate $F^\ast(\qq)$ of $F(\vv)$, and the function $\lambda_{F,R}$ is defined as $\lambda_{F,R}(\vv, y) = - (R\T \vv)_y + F(\vv)$.
\end{theorem}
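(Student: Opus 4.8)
The plan is to split the claim into its two assertions---$T$-properness and lower-boundedness---and to observe that both follow almost immediately once Lemma~\ref{lemma:max-Rv-max-v} is combined with the stated hypothesis. The unifying idea is that the hypothesis controls $F$ relative to $\max_{y}(R\T\vv)_y$, while Lemma~\ref{lemma:max-Rv-max-v} says this quantity dominates $\max_z v_z$; the former inequality delivers boundedness of the loss and the latter reduces the hypothesis to a condition we already know how to use.

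For $T$-properness, I would invoke Theorem~\ref{thm:T-proper-dual-rep} with the choice $\vec{\Delta}(\qq)\equiv\vec{0}$, so that the only thing left to check is its Condition~1, namely that $F(\vv)-\max_{z\in\truelabels} v_z$ is bounded from below on $\domain$. Lemma~\ref{lemma:max-Rv-max-v} gives $\max_{y\in\weaklabels}(R\T\vv)_y \geq \max_{z\in\truelabels} v_z$ for every $\vv\in\onezperp$, hence
\begin{align}
    F(\vv) - \max_{z\in\truelabels} v_z \geq F(\vv) - \max_{y\in\weaklabels}(R\T\vv)_y
    \label{eq:proof-cond1}
\end{align}
on $\domain$. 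Since the right-hand side is bounded from below by assumption, so is the left-hand side, so Condition~1 holds. Theorem~\ref{thm:T-proper-dual-rep} then yields that $l_\mathrm{W}(\qq,y)=\lambda_{F,R}(\subgrad F^\ast(\qq), y)$ is $T$-proper.

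For lower-boundedness, I would argue pointwise in $\vv$ and $y$ before substituting the subgradient. For any $\vv\in\domain$ and any $y\in\weaklabels$ we have $(R\T\vv)_y \leq \max_{y'\in\weaklabels}(R\T\vv)_{y'}$, so
\begin{align}
    \lambda_{F,R}(\vv, y) = F(\vv) - (R\T\vv)_y \geq F(\vv) - \max_{y'\in\weaklabels}(R\T\vv)_{y'} \geq B,
    \label{eq:proof-lb}
\end{align}
where $B$ is the lower bound guaranteed by the hypothesis. This bound is uniform in both $\vv$ and $y$. It then remains only to substitute $\vv=\subgrad F^\ast(\qq)$, which gives $l_\mathrm{W}(\qq, y)\geq B$ for all $\qq\in\tldistr$ and $y\in\weaklabels$.

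The argument is short, and I expect the only delicate point to be the membership $\subgrad F^\ast(\qq)\in\domain$---that is, ensuring that the subgradient function supplied by Theorem~\ref{thm:T-proper-dual-rep} takes values where $F$, and hence the hypothesis of Eq.~\eqref{eq:proof-lb}, is defined. This should hold because a subgradient $\vv\in\partial F^\ast(\qq)$ satisfies $\qq\in\partial F(\vv)$, so $F$ is subdifferentiable, and in particular finite, at $\vv$, placing $\vv$ in $\domain$. With that observation secured, the two bounds above combine to give both conclusions of the theorem.
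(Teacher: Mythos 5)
Your proposal is correct and follows essentially the same route as the paper: Lemma~\ref{lemma:max-Rv-max-v} upgrades the hypothesis to Condition~1 of Theorem~\ref{thm:T-proper-dual-rep} (giving $T$-properness with $\vec{\Delta}\equiv\vec{0}$), and the pointwise bound $\lambda_{F,R}(\vv,y)\geq F(\vv)-\max_{y'}(R\T\vv)_{y'}$ gives lower-boundedness. Your closing remark on why $\subgrad F^\ast(\qq)\in\domain$ is a sensible extra check that the paper leaves implicit.
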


Theorem~\ref{thm:positive-T-proper} gives a sufficient condition for a $T$-proper loss to have a lower bound, but it is not necessary. For example, a $T$-proper loss is not of the above form whenever it has a contribution of $\vec{\Delta}(\qq)$, as in Theorem~\ref{thm:T-proper-dual-rep}, that cannot be absorbed in $\lambda_{F,R}(\qq, y)$. Still, Theorem~\ref{thm:positive-T-proper} gives a large class of lower-bounded $T$-proper losses that are parameterized by a convex function $F$ that is constrained only by a single inequality.

We can also interpret the condition in Theorem~\ref{thm:positive-T-proper} in terms of its dual, or the Bayes risk.
Crudely speaking, the condition can be understood as a constraint to ensure that the Bayes risk is finite at ``class probabilities given a weak label.''
More precisely, the following proposition paraphrases the condition imposed on $F$ in Theorem~\ref{thm:positive-T-proper} into a condition on $F^\ast$.
See Appendix~\ref{prf:equiv} for a proof.
\begin{proposition}\label{prop:equiv}
Let $R$ be a reconstruction matrix for weak labels $\weaklabels$, and let $F:\domain\subset \onezperp \to \real$ be a closed convex function.
Then, $F(\vv) - \max_{y\in \weaklabels} (R\T \vv)_y$ is bounded from below in $\domain$ if and only if $F^\ast(R \vec{e}_y) < \infty$ for all $y\in\weaklabels$, where $\vec{e}_y\in\wldistr$ is a distribution over weak labels that concentrates on a single weak label $y$.
\end{proposition}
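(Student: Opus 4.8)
The plan is to strip the single inequality involving the maximum over $\weaklabels$ into a family of inequalities indexed by weak labels, and then to recognize each member of that family as a finiteness statement about the convex conjugate $F^\ast$. First I would record the elementary identity $(R\T\vv)_y = \inner{R\vec{e}_y}{\vv}$, which holds because $(R\T)_{yz} = R_{zy} = (R\vec{e}_y)_z$; consequently $\max_{y\in\weaklabels}(R\T\vv)_y = \max_{y\in\weaklabels}\inner{R\vec{e}_y}{\vv}$, so the quantity in the statement can be read entirely in terms of the vectors $R\vec{e}_y$.

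The second step ties a single label to the conjugate. Directly from the definition of the convex conjugate,
\begin{align}
  F^\ast(R\vec{e}_y)
  = \sup_{\vv\in\domain}\left[ \inner{R\vec{e}_y}{\vv} - F(\vv) \right]
  = - \inf_{\vv\in\domain}\left[ F(\vv) - (R\T\vv)_y \right].
\end{align}
Since $\domain$ is nonempty this supremum is never $-\infty$, so $F^\ast(R\vec{e}_y)<\infty$ holds if and only if $F(\vv) - (R\T\vv)_y$ is bounded from below on $\domain$. I note that $R\vec{e}_y$ need not lie in $\tldistr$, so this is genuinely a statement about the conjugate as a supremum rather than about the Bayes risk; working from the definition of $F^\ast$ sidesteps that point.

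It then remains to prove that $F(\vv) - \max_{y}(R\T\vv)_y$ is bounded from below on $\domain$ if and only if $F(\vv)-(R\T\vv)_y$ is bounded from below for every $y\in\weaklabels$. The forward direction is immediate, since $\max_{y'}(R\T\vv)_{y'}\geq (R\T\vv)_y$ pointwise means a lower bound on $F(\vv)-\max_{y'}(R\T\vv)_{y'}$ is a lower bound on each $F(\vv)-(R\T\vv)_y$. For the converse, suppose $F(\vv)-(R\T\vv)_y\geq -M_y$ for every $y$. At any fixed $\vv$ the maximum is attained at some $y^\star=\argmax_{y}(R\T\vv)_y$, whence $F(\vv)-\max_y (R\T\vv)_y = F(\vv)-(R\T\vv)_{y^\star}\geq -M_{y^\star}\geq -\max_{y\in\weaklabels}M_y$, with finite right-hand side because $\weaklabels$ is a finite set. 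Chaining the three steps yields the claimed equivalence.

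I do not expect a deep obstacle; the one place demanding care is the converse of the last step, where subtracting the pointwise maximum could a priori ruin boundedness even though each individual difference is bounded. Finiteness of $\weaklabels$ is exactly what rescues the argument, converting the per-label bounds $M_y$ into the single uniform bound $\max_y M_y$. The only other subtlety, already flagged above, is to resist reading $F^\ast(R\vec{e}_y)$ as a Bayes risk, since $R\vec{e}_y$ may fall outside $\tldistr$.
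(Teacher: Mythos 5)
Your proof is correct and follows essentially the same route as the paper's: both rest on the identity $(R\T\vv)_y=\inner{R\vec{e}_y}{\vv}$, the definition of $F^\ast$ as a supremum, and the elementary interchange of the (finite) maximum over $y\in\weaklabels$ with the supremum over $\vv\in\domain$ --- which you phrase as combining per-label lower bounds $M_y$ into $\max_y M_y$. The paper's converse direction passes through general $\qq\in\wldistr$ via $\inner{\qq}{R\T\vv}\leq\max_{y}(R\T\vv)_y$ before specializing to $\vec{e}_y$, but this is only a cosmetic difference from your direct per-label argument.
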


The condition $F^\ast(R \vec{e}_y) < \infty$ can be informally paraphrased as $\underline{L}(R\vec{e}_y) > -\infty$, because $F^\ast$ and $-\underline{L}$ are equal in $\tldistr$.
Note, however, that $R\vec{e}_y$ is not necessarily in $\tldistr$ because of the negative components of $R$, and therefore, $F^\ast$ and $-\underline{L}$ may not be equal at $R\vec{e}_y$.

A function $\lambda_{F,R} (\vv, y)$ is convex as a function of $\vv\in\domain$, because it is a sum of the linear function $(R\T \vv)_y$ and a convex function $F(\vv)$~\citep{vanrooyen.williamson2018}.
As with proper losses, therefore, we can obtain the benefits of the convexity of $\lambda_{F,R} (\vv, y)$ by using $\domain$-valued models.

\subsection{Generalized Logit Squeezing}

If we note that $\max_{y\in \weaklabels} (R\T \vv)_y$ is a positively homogeneous function of degree 1, then any convex function $F$ that grows superlinearly satisfies the condition of Theorem~\ref{thm:positive-T-proper}.
This fact leads to the following corollary of Theorem~\ref{thm:positive-T-proper}, which gives a useful way to regularize an unbounded loss.
\begin{corollary}\label{cor:squared-logits}
Let $F:\domain\subset\onezperp \to \real$ be a convex function, let $\alpha$ be a real number that is greater than 1, and let $k$ be a positive number.
We define a convex function $F'$ as
\begin{align}
    F'(\vv) 
    = F(\vv) + \frac{k}{2} \sum_{z\in\truelabels} |v_z|^\alpha
    \label{eq:gLS}
\end{align}
for $\vv \in \domain$.
Then, a weak-label loss $l_\mathrm{W}(\qq, y) = \lambda_{F',R}(\subgrad {F'}^{\ast}(\qq), y)$ is $T$-proper and lower-bounded.
\end{corollary}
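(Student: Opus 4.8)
The plan is to reduce everything to Theorem~\ref{thm:positive-T-proper}: since the regularized function $F'$ is again a closed convex function on $\domain$, it suffices to verify that $F'(\vv) - \max_{y\in\weaklabels}(R\T\vv)_y$ is bounded from below on $\domain$, and then to apply that theorem with $F'$ in place of $F$. First I would confirm the convexity and closedness of $F'$. Because $\alpha > 1$, each map $v_z \mapsto |v_z|^\alpha$ is convex and finite-valued, so $\frac{k}{2}\sum_{z\in\truelabels}|v_z|^\alpha$ is a finite continuous convex function; adding it to the closed convex function $F$ keeps the epigraph closed, so $F'$ is a closed convex function with the same domain $\domain$.

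Next I would isolate the growth behaviour of the relevant difference. A closed real-valued convex function admits an affine minorant: choosing a point $\vv_0$ in the relative interior of $\domain$ and any subgradient $\vec a = \subgrad F(\vv_0)$, the subgradient inequality gives $F(\vv) \geq \inner{\vec a}{\vv} + b$ for all $\vv\in\domain$, with $b = F(\vv_0) - \inner{\vec a}{\vv_0}$. Hence $F'(\vv) - \max_{y\in\weaklabels}(R\T\vv)_y \geq \frac{k}{2}\sum_{z\in\truelabels}|v_z|^\alpha - h(\vv) + b$, where $h(\vv) = \max_{y\in\weaklabels}(R\T\vv)_y - \inner{\vec a}{\vv}$ is a maximum of linear functionals, hence convex and positively homogeneous of degree $1$. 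Consequently $h(\vv) \leq M\|\vv\|$ with $M = \max\bigl(0, \max_{\|\vv\|=1} h(\vv)\bigr) < \infty$, the inner maximum being finite by continuity of $h$ on the compact unit sphere.

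The final step is the superlinearity comparison. By equivalence of norms on the finite-dimensional space $\onezperp$, there is a constant $c > 0$ with $\sum_{z\in\truelabels}|v_z|^\alpha \geq c\|\vv\|^\alpha$, so the lower bound becomes $\frac{kc}{2}\|\vv\|^\alpha - M\|\vv\| + b$. Since $\alpha > 1$, the scalar function $r \mapsto \frac{kc}{2}r^\alpha - Mr$ on $r\geq 0$ tends to $+\infty$ and attains a finite minimum, so $F'(\vv) - \max_{y\in\weaklabels}(R\T\vv)_y$ is bounded from below on $\domain$. Theorem~\ref{thm:positive-T-proper} applied to $F'$ then yields that $l_\mathrm{W}(\qq, y) = \lambda_{F',R}(\subgrad {F'}^{\ast}(\qq), y)$ is $T$-proper and lower-bounded, as claimed.

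I expect the main obstacle to be that $F'(\vv) - \max_{y\in\weaklabels}(R\T\vv)_y$ is a difference of two convex functions and is therefore not convex, so its lower-boundedness cannot simply be read off from convexity; the argument must instead exploit the positive homogeneity of $\max_{y\in\weaklabels}(R\T\vv)_y$ to control it by a single linear term $M\|\vv\|$ uniformly over all directions, and then use the strictly superlinear ($\alpha > 1$) growth of the regularizer to dominate that linear term. A secondary point requiring care is the affine minorant for the \emph{merely} convex $F$ and its validity on all of $\domain$ (including the relative boundary), since this is precisely what lets the conclusion hold for an arbitrary convex $F$ rather than only for an $F$ that is already superlinear.
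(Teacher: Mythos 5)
Your proof is correct and takes essentially the same route as the paper, which justifies the corollary in one line by observing that $\max_{y\in\weaklabels}(R\T\vv)_y$ is positively homogeneous of degree $1$ while the gLS term grows superlinearly, and then invoking Theorem~\ref{thm:positive-T-proper} with $F'$ in place of $F$. Your explicit extraction of an affine minorant of $F$ from a subgradient at a relative-interior point of $\domain$ is a detail the paper leaves implicit, and it is exactly what is needed to make the comparison rigorous for an arbitrary convex $F$.
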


\begin{algorithm}[t]
\renewcommand{\algorithmicrequire}{\textbf{Input:}}
\renewcommand{\algorithmicensure}{\textbf{Output:}}
\caption{Training of the linear model with the backward-corrected cross entropy and generalized logit squeezing\label{algo}.}
\begin{algorithmic}
\REQUIRE training data $D=\{(\vec{x}_i, \vec{y}_i) \}$, reconstruction matrix $R$, coefficient $k$, exponent $\alpha$, batch size $N$, SGD-like algorithm $\mathcal{A}$.
\ENSURE weight matrix $W$
\STATE{Initialize weights $W$}
\REPEAT
    \STATE Sample minibatch $(X, \vec{y})$ from $D$
    \STATE $V \leftarrow X W\T$
    \STATE $l_\mathrm{ce} \leftarrow \frac{1}{N} \sum_{i=1}^N \left[-(VR)_{i y_i} + \log \sum_z \exp(v_{iz}) \right]$
    \STATE $l_\mathrm{gLS} \leftarrow \frac{1}{N}\sum_{i=1}^N \sum_z \frac{k}{2} |v_{iz}|^\alpha$
    \STATE $l \leftarrow l_\mathrm{ce} + l_\mathrm{gLS}$
    \STATE Update $W$ by using an algorithm $\mathcal{A}$
\UNTIL{a stopping criterion is met}
\end{algorithmic}
\end{algorithm}
To facilitate the use of this corollary, we present pseudocode for training the linear model with the backward-corrected cross entropy loss in Algorithm~\ref{algo}.

The term $\sum_{z\in\truelabels} |v_z|^\alpha$ is convex if and only if $\alpha\geq1$.
The conclusion of the corollary for $\alpha = 1$ depends on the precise form of $F$ and the value of $k$.

Corollary~\ref{cor:squared-logits} indicates that if a $T$-proper loss associated with $F$ is not bounded from below, then we can replace $F$ with $F+\frac{k}{2}\sum_{z\in\truelabels}| v_z |^\alpha$ to make the loss bounded while keeping its $T$-properness.
We refer to the proposed regularization scheme of Eq.~\eqref{eq:gLS} as generalized logit squeezing (gLS).
The special case with $\alpha=2$ has the same form as the regularization schemes called feature contraction~\cite{li.maki2018} and logit squeezing~\cite{kannan.etal2018}.
Those previous studies focused on the performance of supervised learning~\cite{li.maki2018} or adversarial robustness~\cite{kannan.etal2018}, and they were mostly empirical.
On the other hand, gLS has a solid theoretical foundation that guarantees its asymptotic success in weakly supervised learning.

Although gLS might appear similar to $L^p$ regularization, they are different concepts.
gLS penalizes a model's large output values, whereas normal $L^p$ regularization pulls training trajectories toward smaller norms of the weights.
They both restrict the model space but in different ways, and their actual effects on learning might be very different.
On the other hand, if $\vv\in\domain$ is a linear function of the weights, then gLS is closely related to $L^p$ regularization, because the gLS term is a positively homogeneous function of degree $\alpha$ on the weights.
In this particular case, the two regularization schemes could be expected to work in a similar way.

\section{Experiment}

\begin{table}
    \centering
    \caption{Comparison of three losses.
    BC, GA, and gLS respectively stand for backward correction, gradient ascent, and generalized logit squeezing.
    \label{tab:loss-comparison}}
    \vskip 0.15in
    \begin{small}
    \begin{tabular}{ccc}
    \toprule
     & Proper & Bounded \\
     \midrule
     BC & \cmark & \xmark \\
     BC + GA & \xmark & \cmark \\
     BC + gLS & \cmark & \cmark \\
    \bottomrule
    \end{tabular}
    \end{small}
    \vskip -0.1in
\end{table}
\begin{table*}
    \centering
    \caption{
    Mean and sample standard deviation of the test accuracy.
    The best accuracy for each dataset and model is shown in boldface.
    BC: backward correction; GA: gradient ascent; gLS: generalized logit squeezing, with the exponent $\alpha$ fixed to $2$.}
    \label{tab:results}
    \vskip 0.15in
    \begin{small}
    \begin{tabular}{lccccc}
        \toprule
        &
            Weight decay &
            MNIST, linear &
            MNIST, MLP &
            CIFAR-10, ResNet-20 &
            CIFAR-10, WRN-28-2 \\
        \midrule
        BC & fixed & 
            $81.52 \pm 1.44 \,\%$ &
            $83.09 \pm 0.67 \,\%$ &
            $28.86 \pm 2.06 \,\%$ &
            $29.57 \pm 1.58 \,\%$ \\
        BC & tuned & 
            $83.56 \pm 0.87 \,\%$ &
            $83.30 \pm 1.01 \,\%$ &
            $29.56 \pm 1.49 \,\%$ &
            $30.02 \pm 1.49 \,\%$ \\
        BC + GA & fixed &
            $78.57 \pm 1.82 \,\%$ &
            $87.88 \pm 1.11 \,\%$ &
            $34.39 \pm 2.96 \,\%$ &
            $36.87 \pm 2.26 \,\%$ \\
        BC + GA & tuned &
            $80.63 \pm 1.01 \,\%$ &
            $\mathbf{89.15} \pm 0.75 \,\%$ &
            $35.36 \pm 1.80 \,\%$ &
            $36.90 \pm 2.52\,\%$ \\
        BC + gLS & fixed &
            $\mathbf{83.77} \pm 0.55 \,\%$ &
            $88.63 \pm 0.38 \,\%$ &
            $\mathbf{49.71} \pm 3.04 \,\%$ &
            $\mathbf{49.98} \pm 2.59 \,\%$ \\
        \bottomrule
    \end{tabular}
    \end{small}
    \vskip -0.1in
\end{table*}
In this section, we experimentally compare three different losses, all of which derive from the cross-entropy loss, to demonstrate the effectiveness of lower-bounded proper losses\footnote{The code is publicly available at \url{https://github.com/yoshum/lower-bounded-proper-losses}.}.
Table~\ref{tab:loss-comparison} summarizes these losses.
We take the backward correction (BC) of the softmax cross entropy as a baseline loss:
\begin{align}
    \lambda_{F,R}(\vv, y)
    = - (R\T \vv)_y 
        + \log \sum_{z\in\truelabels} e^{v_z},
\end{align}
which is proper but lower-unbounded.
Here, $\vec{v}$ is so-called logits, which can be converted into class posterior probabilities with the softmax function.
In our experiments, this loss is made bounded from below in two different ways.
The first way is to apply gLS to the backward-corrected cross entropy and use
\begin{align}
    \lambda_{F,R}(\vv, y)
    = - (R\T \vv)_y 
        + \log \sum_{z\in\truelabels} e^{v_z}
        + \frac{k}{2} \sum_{z\in\truelabels} |v_z|^\alpha.
    \label{eq:squared-logits}
\end{align}
For brevity, we refer to this loss as BC + gLS.
It is proper and lower-bounded if $\alpha > 1$, while it becomes improper and lower-unbounded if $\alpha < 1$.
The properties for the boundary case of $\alpha=1$ depend on the value of $k$.
The other way to make the loss bounded from below is to use gradient ascent (GA)~\cite{kiryo.etal2017,ishida.etal2019,lu.etal2020a}, which updates a model in the ascending direction of the loss surface when the empirical class-conditional risk becomes negative.
GA makes the training objective bounded from below but improper.

\subsection{Setup}

As a specific example of weak labels, we experimented with complementary labels~\cite{ishida.etal2017,yu.etal2018,ishida.etal2019}.
Let $c$ be any category label.
Then, a complementary label $\overline{c}$ put on an instance indicates that it belongs to a category other than $c$.
For $K$-class classification, (unbiased) complementary labels are characterized by the following transition matrix:
\begin{align} \label{eq:Tcomp}
    T = \frac{1}{K-1} (1_K - I_K),
\end{align}
where $1_K$ is the $K\times K$ matrix with all elements 1, and $I_K$ is the $K\times K$ identity matrix.
This can be seen as an extreme case of noisy labels, where a label is corrupted with probability 1.

We evaluated the effectiveness of the losses on the MNIST~\cite{lecun.etal1998} and CIFAR-10~\cite{krizhevsky2009} datasets.
To each instance in these datasets, we randomly assigned a complementary label with conditional probabilities given the ground-truth category, which is given by Eq.~\eqref{eq:Tcomp}.
For each dataset, we trained two models: a linear model and a feed-forward network with one hidden layer (multilayer perceptron; MLP) were used with MNIST, and ResNet-20~\cite{he.etal2016} and Wide-ResNet~(WRN)~28-2~\cite{zagoruyko.komodakis2016} were used with CIFAR-10.
We used stochastic gradient descent with momentum to optimize the models.
The momentum was fixed to $0.9$, while the initial learning rates were chosen as those giving the best validation accuracy.
The default value of the weight decay was $10^{-4}$, but we also tuned it with BC and BC~+~GA to compare its effect with that of gLS.
More details on the experimental procedure and the hyperparameters are given in Appendix~\ref{apdx:experiment}.

\subsection{Results}

In Table~\ref{tab:results}, we list the mean and the sample standard deviation of the test accuracy for 16 trials with the chosen hyperparameters.
Here, we tuned the coefficient $k$ and fixed the exponent $\alpha=2$ for BC~+~gLS.
If the weight decay was fixed to the default value for all the models, BC~+~gLS achieved the best test accuracy by a clear margin.
It was particularly effective on the CIFAR-10 benchmark, which used more complex models.
This is reasonable, because complex models are easier to fit to an unbounded training loss and are affected more severely than simple models; therefore, they are more sensitive to how the lower-unboundedness is prevented by regularization.

On the other hand, the effects of tuning the weight decay were not consistent among the different models.
In the experiments with BC \textit{without} GA, we found that tuning the weight decay brought a gain of approximately 2\% to the linear model, which enabled BC to achieve performance comparable to that of the proposed method (BC~+~gLS), but the gains were insignificant for the other models.
This is consistent with the observation that in the linear model, the two regularization schemes have similar functional forms, as explained in the previous section.
By contrast, a larger weight decay seemed to penalize complex models too much and cause underfitting before it prevented loss divergence.
The experiments with BC~+~GA showed a similar trend, except for the MLP model, which exhibited a gain of about 2\% from tuning the weight decay.
Overall, the weight decay could narrow or close the gap between the baselines and the proposed method for simpler models, but it did not have a significant effect for deeper models, which more severely suffer from overfitting due to a lower-unbounded loss.

\begin{figure}
\vskip 0.1in
    \centering
    \includegraphics[width=\hsize]{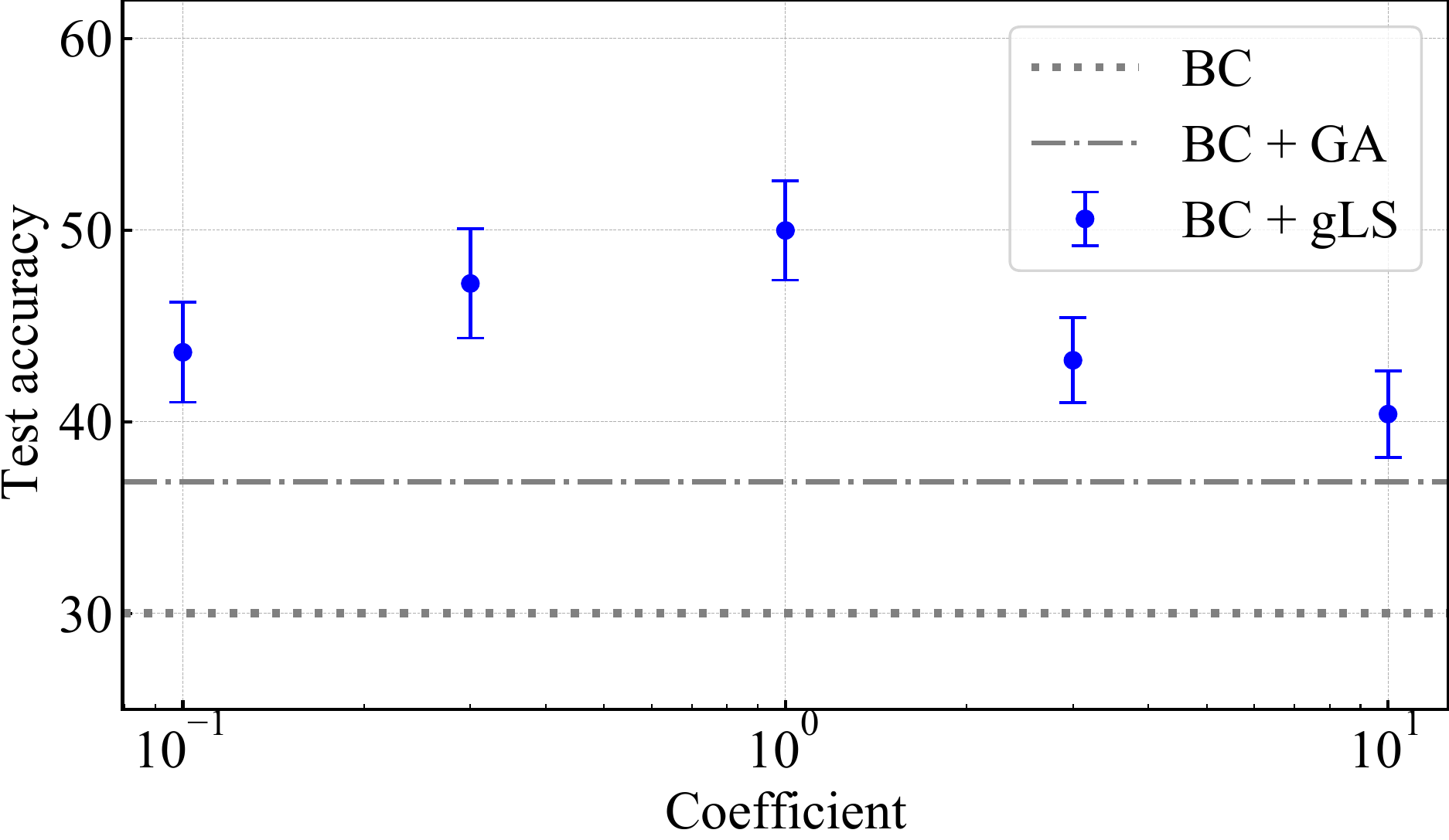}
    \caption{Sensitivity of the test accuracy to the coefficient of the squared logit term in Eq.~\eqref{eq:squared-logits}.
    The bars represent the sample standard deviations.
    The horizontal lines indicate the test accuracies of BC (dotted) and BC~+~GA (dashed-dotted) with the weight decay coefficients that achieved the best validation accuracy.}
    \label{fig:coeff}
    \vskip -0.1in
\end{figure}
We also examined how sensitive the accuracy is to the coefficient $k$ of the gLS term in Eq.~\eqref{eq:squared-logits}.
Figure~\ref{fig:coeff} shows the test accuracies for different $k$ values on CIFAR-10 trained with WRN-28-2, and it indicates two findings.
First, the test accuracy depended significantly on $k$, and it is thus important to choose an appropriate value of $k$ to obtain the best results.
In a sense, this is an obvious conclusion: both of the limits, $k\to0$ and $k\to \infty$, are undesirable, because the former would converge to BC, while the latter would lead to a model that outputs zero for any input; therefore, there should be an optimal value of $k$.
Second, however, the figure indicates that the results were not too sensitive to $k$ and that BC + gLS yielded better results over two orders of magnitude of $k$ as compared to the other methods.

\begin{figure}
\vskip 0.1in
    \centering
    \includegraphics[width=\hsize]{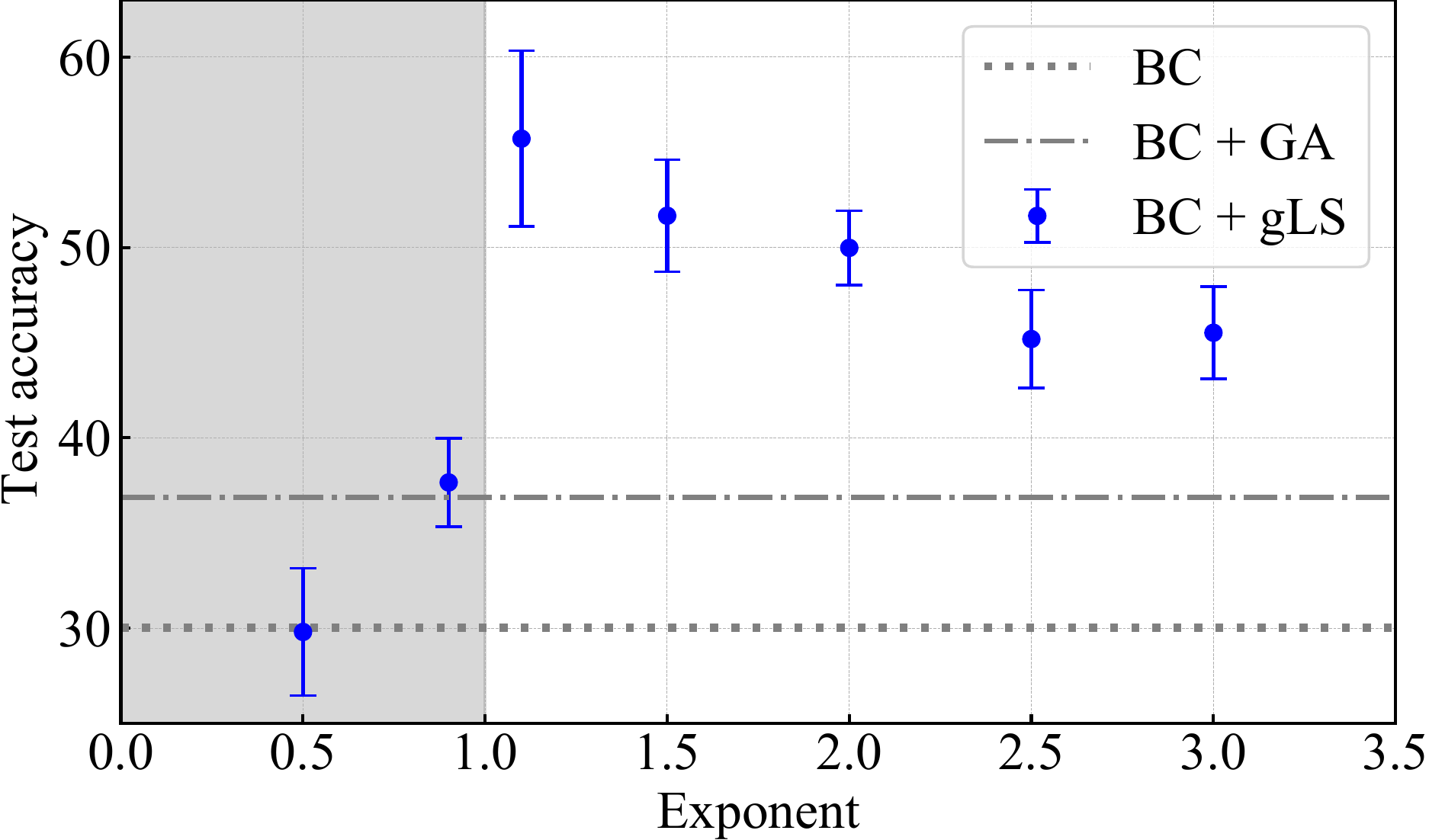}
    \caption{Performance of gLS for different exponents.
    The bars represent the sample standard deviations.
    The gray region ($\alpha < 1$) represents the regime in which gLS yields an improper, lower-unbounded loss.
    The horizontal lines indicate the test accuracies of BC (dotted) and BC~+~GA (dashed-dotted) with the weight decay coefficients that achieved the best validation accuracy.}
    \label{fig:exponent}
    \vskip -0.1in
\end{figure}
As Corollary~\ref{cor:squared-logits} indicates, gLS yields lower-bounded $T$-proper losses as long as $\alpha > 1$.
In Figure~\ref{fig:exponent}, we show the test accuracies on CIFAR-10 trained with WRN-28-2 by using various exponents.
The results demonstrate that gLS gave superior results as compared to the baseline methods, even with $\alpha\neq2$.
Interestingly, the test accuracies improved as $\alpha \to 1$.
As $\alpha$ became less than 1, however, the test accuracies immediately dropped to values similar to the baselines.
This observation not only validates the effectiveness of our proposed approach but also underlines the importance of using $T$-proper and lower-bounded losses, which is the central premise that motivated our theoretical analysis.

\section{Conclusion}

In this paper, we have discussed proper losses for weakly supervised classification. We first derived the dual representation of proper losses for supervised learning. Instead of the Bayes risk, which plays a central role in the Savage representation, the derived theorem represents a loss with a function related to the convex conjugate of the Bayes risk. We then used this theorem to characterize $T$-proper losses and derived a sufficient condition for them to be bounded from below.
These theoretical findings led to a novel regularization scheme called generalized logit squeezing (gLS), which prevents any proper weak-label loss from diverging to negative infinity, while keeping the properness of the original loss.
We also experimentally demonstrated the effectiveness of our proposed approach.
Remarkably, gLS yielded superior results as compared to the baseline methods regardless of the precise values of the hyperparameters that are specific to gLS, as long as those parameters were in the regime in which gLS gives $T$-proper and lower-bounded losses.

\section*{Acknowledgement}

SMY gratefully acknowledges fruitful discussions with Akira Tanimoto and Makoto Terao.
TT was partially supported by JSPS KAKENHI Grant Numbers 20K03753 and 19H04071.

{
    \bibliographystyle{icml2021}
    \bibliography{library}
}

\newpage
\onecolumn
\appendix
\section*{Appendix}

\section{Relation of Reconstructibility to Other Assumptions in the Literature}\label{apdx:left-invertibility}

In this appendix, we compare Assumption (b), i.e., the left-invertibility of the label transition matrix $T$, with other learnability/invertibility assumptions in the literature of weakly supervised learning.

\subsection{Identifiability from \citet{zhang.etal2019d}}

Consider a probability distribution $P(X\vert \theta)$ that is parameterized by a set of parameters $\theta \in \Theta$.
This parametric family of distributions satisfies the identifiability condition if
\begin{align}
    \forall \theta_1, \theta_2 \in \Theta, 
    \theta_1 \neq \theta_2
    \Longrightarrow
    P(X \vert \theta_1) \neq P(X \vert \theta_2).
\end{align}
In other words, if $P(X\vert \theta)$ is perfectly known, then the parameter $\theta$ can be uniquely identified.

\citet{zhang.etal2019d} proved the consistency of their algorithm under several assumptions, among which the most fundamental is identifiability of the posterior probability distributions of weak and true labels.
More precisely, they assumed that for any input pattern $x\in\mathcal{X}$, the posterior probability distribution of true labels, $P(Z\vert x)$, belongs to a parametric family of identifiable probability distributions.
Let $P(Z\vert \theta)$ be a distribution in that family and $\Theta$ be a set of parameters.
In addition, they also assumed that the posterior probabilities of weak labels are also identifiable; that is,
\begin{align}
    \forall \theta_1, \theta_2 \in \Theta,
    \theta_1 \neq \theta_2
    \Longrightarrow
    P(Y \vert \theta_1) \neq P(Y \vert \theta_2),
    \label{eq:weak-label-identifiabiliety}
\end{align}
where $P(Y=y \vert \theta) \equiv \sum_{z\in\truelabels} T_{yz} P(Z=z \vert \theta)$.
Note that $\theta$ is different from model parameters such as weights in neural networks.
Here, a model is a function $f_w: \mathcal{X} \to \Theta$ that is parameterized by a set of network weights $w$.
For a given input $x$, it predicts a parameter $\theta = f_w(x)$ and in turn the posterior probabilities $P(Z\vert \theta)$.

The identifiability of true label distributions is automatically satisfied by careful implementation.
For example, if the categorical posterior probabilities are expressed by using the softmax function, the choice $\Theta=\onezperp$ guarantees identifiability.
Therefore, we use this assumption in the discussion below.

\citet{zhang.etal2019d} claimed that they successfully avoided relying on the existence of a left-inverse of $T$ by resorting to the identifiability assumptions.
However, without any prior knowledge on the true posterior probability distributions, identifiability implies the left-invertibility of $T$.
Specifically, we can prove the following proposition.

\begin{proposition}
Let $T$ be a label transition matrix.
Assume that for any $x\in\mathcal{X}$, a posterior probability distribution $P(Z\vert x)$ of true labels belongs to a parametric family $\{ P(Z\vert \theta) \mid \theta \in \Theta \}$ of identifiable distributions.
Then, the left-invertibility of $T$ implies the identifiability of $P(Y\vert x)$, the posterior probability distribution of weak labels.
Moreover, the converse also holds if $\{ P(Z\vert \theta) \mid \theta \in \Theta \} = \tldistr$.
\end{proposition}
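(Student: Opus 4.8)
The plan is to convert both identifiability statements into injectivity statements about the linear map $T$ and then exploit the column-stochasticity of $T$. For the forward direction, I would use the identity $P(Y\vert\theta) = T\,P(Z\vert\theta)$ together with a left inverse $R$ satisfying $RT=\idz$, and argue by contraposition. If $P(Y\vert\theta_1)=P(Y\vert\theta_2)$, then $T\,P(Z\vert\theta_1)=T\,P(Z\vert\theta_2)$, and multiplying on the left by $R$ gives $P(Z\vert\theta_1)=P(Z\vert\theta_2)$; the assumed identifiability of the true-label family then forces $\theta_1=\theta_2$. Hence distinct parameters yield distinct weak-label distributions, which is precisely the identifiability of $P(Y\vert\theta)$. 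Note that this direction uses only the existence of $R$ and never the hypothesis $\{P(Z\vert\theta)\}=\tldistr$.

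For the converse I would assume $\{P(Z\vert\theta)\mid\theta\in\Theta\}=\tldistr$. Because this family is identifiable, the map $\theta\mapsto P(Z\vert\theta)$ is a bijection onto $\tldistr$, so the weak-label identifiability condition becomes exactly the injectivity of $T$ on the simplex: for $\pp_1,\pp_2\in\tldistr$, $T\pp_1=T\pp_2$ implies $\pp_1=\pp_2$. The goal is then to upgrade injectivity on $\tldistr$ to $\ker T=\{\vec{0}\}$ on all of $\real^{\truelabels}$, which is equivalent to $T$ having full column rank, and hence to left-invertibility.

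The main obstacle is exactly this gap between injectivity on the $(\numtls-1)$-dimensional simplex and injectivity of $T$ as a linear map, and I would bridge it in two steps. First, since each column of $T$ sums to one, $T$ preserves coordinate sums, i.e. $\oney\T T=\onez\T$; consequently any $\vec{w}\in\ker T$ satisfies $\sum_{z\in\truelabels} w_z = \oney\T T\vec{w}=0$, so $\ker T\subseteq\onezperp$. Second, I would show $\ker T\cap\onezperp=\{\vec{0}\}$: given a nonzero $\vec{w}\in\onezperp$, the simplex is full-dimensional in the hyperplane $\{\pp:\sum_{z}p_z=1\}$, so I can pick $\pp$ in the relative interior of $\tldistr$ and $\epsilon>0$ small enough that both $\pp$ and $\pp+\epsilon\vec{w}$ lie in $\tldistr$; were $\vec{w}$ in $\ker T$, then $T(\pp+\epsilon\vec{w})=T\pp$ with $\pp+\epsilon\vec{w}\neq\pp$, contradicting injectivity on $\tldistr$. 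Combining the two steps gives $\ker T=\ker T\cap\onezperp=\{\vec{0}\}$, so $T$ is left-invertible. The only delicate point is the geometric second step, but it is routine once one observes that the relative interior of $\tldistr$ is nonempty and that translations by vectors of $\onezperp$ stay within its affine hull.
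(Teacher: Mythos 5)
Your proposal is correct and follows essentially the same route as the paper: the forward direction is the identical contraposition through the left inverse $R$, and the converse likewise reduces weak-label identifiability to the triviality of $\ker T$ by exploiting the fact that differences of points of $\tldistr$ exhaust $\onezperp$. The only difference is in how the kernel argument is closed out: you first establish $\ker T\subseteq\onezperp$ from $\oney\T T=\onez\T$ and then intersect with injectivity on the simplex, whereas the paper handles the $\onez$ direction separately by noting that the uniform distribution cannot be mapped to the zero vector; your ordering is, if anything, the tidier of the two.
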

\begin{proof}
Suppose that $T$ is left-invertible.
Then, it holds that
\begin{align}
    P(Z=z \vert \theta) = \sum_{y\in\weaklabels} R_{zy} P(Y=y\vert \theta),
\end{align}
where $R$ is a left-inverse of $T$.
This implies that if $P(Y\vert\theta_1) = P(Y\vert\theta_2)$, then $P(Z\vert\theta_1) = P(Z\vert\theta_2)$, from which it follows that $\theta_1 = \theta_2$ because of the identifiability of $P(Z\vert\theta)$.
Therefore, Eq.~\eqref{eq:weak-label-identifiabiliety} holds.

Conversely, suppose that $P(Y\vert\theta)$ is identifiable and also that $\{ P(Z\vert \theta) \mid \theta \in \Theta \} = \tldistr$.
Let $\theta_1$ and $\theta_2$ be parameters in $\Theta$ such that $\theta_1 \neq \theta_2$, 
let $\vec{\Delta}_Z (\theta_1, \theta_2)$ be a vector in $\real^\truelabels$ with components $[\vec\Delta_Z (\theta_1, \theta_2)]_z= P(Z=z \vert \theta_1) - P(Z=z\vert \theta_2)$, and
let $\vec{\Delta}_Y (\theta_1, \theta_2)$ be a vector in $\real^\weaklabels$ with components $[\vec\Delta_Y (\theta_1, \theta_2)]_y= P(Y=y \vert \theta_1) - P(Y=y\vert \theta_2)$.
By the assumption that $\{ P(Z\vert \theta) \mid \theta \in \Theta \} = \tldistr$, we have that
\begin{align}
    \{ t \vec{\Delta}_Z(\theta_1, \theta_2) \mid t \in \real, \theta_1\in \Theta, \theta_2 \in \Theta \} = \onezperp.
\end{align}
On the other hand, by the assumption that $P(Y\vert\theta)$ is identifiable, for any $\theta_1, \theta_2 \in \Theta$ such that $\theta_1 \neq \theta_2$, it holds that
\begin{align}
    \vec{\Delta}_Y (\theta_1, \theta_2) = T \vec{\Delta}_Z (\theta_1, \theta_2) \neq \vec{0} \quad \text{and} \quad
    \vec\Delta_Z (\theta_1, \theta_2) \neq \vec{0}.
\end{align}
These equations imply that $\vec\Delta_Z (\theta_1, \theta_2)$ is a nonzero vector that does not belong to the kernel of $T$.
They also imply that any vector in the kernel of $T$ is perpendicular to $\onezperp$.
Moreover, $\onez$ is not in the kernel of $T$: if that were the case, the uniform distribution of true labels would be mapped to the zero vector, which does not correspond to any weak-label distribution.
Therefore, the kernel of $T$ is $\{\vec0\}$, which means that $T$ is left-invertible.
\end{proof}

This proposition suggests that the identifiability assumption is equivalent to the left-invertibility of $T$ in cases with $\{ P(Z\vert \theta) \mid \theta \in \Theta \} = \tldistr$.
Indeed, this is what usually happens in practice:
we do not know \textit{a priori} in which subset of $\tldistr$ the true posterior probabilities reside, and thus, it is customary to take $\{ P(Z\vert \theta) \mid \theta \in \Theta \}$ to be $\tldistr$ itself.
We can see from the proof above that if $T$ is not left-invertible, then $\vec{\Delta}_Z(\theta_1, \theta_2)$ must lie outside the kernel of $T$ for any $\theta_1, \theta_2 \in \Theta$ in order for $P(Y\vert\theta)$ to be identifiable.
This constraint implies that $\{ P(Z\vert \theta) \mid \theta \in \Theta \}$ has strictly lower dimensions than $\tldistr$ does, which essentially means that we can exclude some of the labels in $\truelabels$ at the modeling step.

\subsection{Non-Ambiguity Condition in Partial-Label Learning}

In theoretical analyses of partial-label learning, the so-called non-ambiguity condition has been used~\citep{cour.etal2011,cabannes.etal2020}.
In this section, we discuss the relation between the non-ambiguity and the left-invertibility of $T$.

A partial label $y\in \weaklabels$ is a candidate set of labels, only one of which is correct.
Obviously, $\weaklabels \subset 2^{\truelabels} \setminus \{ \emptyset \}$, where $2^{\truelabels}$ is the power set of $\truelabels$.
The empty set $\emptyset$ is not in $\weaklabels$ because a partial label always contains a correct label.

\begin{definition}[Non-ambiguity condition]
Let $P(z'\in Y\setminus \{ z \} \mid Z=z)$ be the probability that a partial label contains an incorrect label $z'$, given a true label $z$.
Then, the ambiguity degree $\epsilon$ is defined as follows\footnote{In the original paper~\citep{cour.etal2011}, the ambiguity degree was defined with the probability conditioned on an input pattern $x$ as well. We omit that conditioning for brevity because we assume that the distribution of weak labels does not depend on $x$.}:
\begin{align}
    \epsilon
    \equiv
    \sup_{z,z' \in \truelabels, P(Z=z)>0}
        P(z'\in Y\setminus \{ z \} \mid Z=z).
\end{align}
Partial labels are said to satisfy the non-ambiguity condition if $\epsilon < 1$.
\end{definition}

The ambiguity degree is the maximum probability of co-occurrence of an incorrect label $z'$ with a correct label $z$.
To gain some intuition into the ambiguity degree and the non-ambiguity condition, let us consider two extreme cases: $\epsilon=0$ and $\epsilon=1$.
The equality $\epsilon=0$ implies that a weak label $y$ is always a singleton $\{z\}$ if the correct label is $z$.
That is, every instance is given only the correct label, and therefore, this is equivalent to supervised learning.
On the other hand, when $\epsilon=1$ and the non-ambiguity condition is not satisfied, there is a pair of labels $z$ and $z'$ in $\truelabels$ such that if a true label of an instance is $z$, an incorrect label $z'$ is always given to that instance as well.

There is a simple example that does not satisfy the non-ambiguity condition but has a left-invertible label transition matrix $T$.
Consider a binary classification problem $\truelabels=\{1, 2\}$ with a partial label set $\weaklabels=\{ \{1\}, \{1,2\}\} $.
If we identify $1$ with the positive label and $2$ with the negative label, this problem is often referred to as positive-unlabeled (PU) learning or learning with totally asymmetric label noise.
The label transition matrix $T$ has the following form:
\begin{align}
    T = \begin{pmatrix}
    r & 0 \\
    1-r & 1
    \end{pmatrix},
\end{align}
where $r$ ($0<r<1$) is the proportion of positively labeled instances in truly positive instances.
This $T$ is left-invertible and yet breaks the non-ambiguity condition, because examples with the correct label $2$ always have a partial label $\{1, 2\}$ including the incorrect label $1$.

We can further show the following proposition.

\begin{proposition}
Suppose that partial labels satisfy the non-ambiguity condition.
If $\|\truelabels \| =2 $ or $3$, then the label transition matrix $T$ is left-invertible.
On the other hand, if $\|\truelabels \| > 3$, then the label transition matrix is not necessarily left-invertible.
\end{proposition}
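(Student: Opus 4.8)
The plan is to prove left-invertibility by showing $\ker T = \{\vec 0\}$, since reconstructibility of $T$ is exactly injectivity of its columns. The first reduction I would make is to note that each column of $T$ is a probability vector over $\weaklabels$, so summing all the coordinates of the identity $T\vec a = \sum_{z} a_z (\text{column } z) = \vec 0$ collapses to $\sum_{z\in\truelabels} a_z = 0$; hence every kernel vector is zero-sum, and if it is nonzero its largest coordinate is strictly positive. I would also restate the non-ambiguity condition in the contrapositive form that is actually useful here: $\epsilon<1$ is equivalent to requiring, for every ordered pair $z\neq z'$, that $P(z'\notin Y\mid Z=z)>0$, i.e.\ some partial label containing $z$ but not $z'$ receives positive mass given $Z=z$.

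Next I would exploit a one-line observation: if $a_z\neq0$, then the singleton $\{z\}$ must have zero mass given $Z=z$, because the row of $T\vec a$ indexed by $\{z\}$ reads $T_{\{z\},z}\,a_z=0$. For $\card{\truelabels}=2$ this already finishes the argument, since non-ambiguity forces each singleton $\{z\}$ to carry positive mass (it is the only set containing $z$ but not the other label), which contradicts $a_z\neq0$; hence $\vec a=\vec0$. For $\card{\truelabels}=3$, I would relabel so that label $1$ attains the (positive) maximum. Because $\{1\}$ then has zero mass, non-ambiguity excluding label $2$ (resp.\ label $3$) forces positive mass on the pair $\{1,3\}$ (resp.\ $\{1,2\}$); the corresponding two-term row equations $T_{\{1,3\},1}a_1+T_{\{1,3\},3}a_3=0$ and $T_{\{1,2\},1}a_1+T_{\{1,2\},2}a_2=0$, together with $a_1>0$ and the nonnegativity of $T$, force $a_2<0$ and $a_3<0$. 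Finally, applying the same logic to label $2$ (whose singleton now also has zero mass) and excluding label $1$ forces positive mass on $\{2,3\}$, whose row equation $T_{\{2,3\},2}a_2+T_{\{2,3\},3}a_3=0$ is impossible once both $a_2,a_3<0$. This contradiction gives $\vec a=\vec0$. The crucial feature is that with only three labels, removing the maximizer and one other label isolates a single remaining label, which the row equation then pins to a definite sign; this is exactly what fails once $\card{\truelabels}\geq4$.

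For the converse claim I would give an explicit counterexample valid for every $\card{\truelabels}>3$. Take the four pair-labels $\{1,2\},\{2,3\},\{3,4\},\{1,4\}$ forming a $4$-cycle; assign, for each $z\in\{1,2,3,4\}$, conditional probability $\tfrac12$ to each of the two cycle edges containing $z$, and for each remaining label $z\in\{5,\dots,\card{\truelabels}\}$ set $P(Y=\{z\}\mid Z=z)=1$. A direct check shows $\epsilon=\tfrac12<1$, so non-ambiguity holds, while $\vec a=(1,-1,1,-1,0,\dots,0)$ satisfies $T\vec a=\vec0$ because each cycle-edge row averages two opposite-signed entries and each singleton row $\{z\}$ returns $a_z=0$. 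Thus $T$ has nontrivial kernel and is not left-invertible.

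The main obstacle I anticipate is the bookkeeping in the $\card{\truelabels}=3$ case: one must carefully translate non-ambiguity into the correct ``positive mass on this specific pair'' statements and confirm that the relevant rows involve only the two intended columns, so that the sign contradiction is genuine rather than an artifact of an overlooked third term. Once the zero-sum reduction and the singleton observation are in place, the sign-propagation argument is short, and the even-cycle construction makes the $\card{\truelabels}\geq4$ failure transparent.
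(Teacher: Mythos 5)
Your proof is correct, and the counterexample half is essentially identical to the paper's: your $4$-cycle of pair labels $\{1,2\},\{2,3\},\{3,4\},\{1,4\}$ with the alternating kernel vector $(1,-1,1,-1,0,\dots,0)$ is isomorphic to the paper's matrix built from the partial labels $(1,3),(1,4),(2,3),(2,4)$ with kernel vector $(1,1,-1,-1)\T$, and the padding by singletons for $\card{\truelabels}>4$ matches the paper's block-diagonal extension. For the positive direction with $\card{\truelabels}\leq 3$ your route is genuinely different in mechanics, though it rests on the same two facts (kernel vectors of $T$ are zero-sum because the columns are probability vectors, and $T_{y,z}>0$ forces $z\in y$). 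The paper argues by contraposition: a linear dependence among the columns is reduced, after the zero-sum observation, to one of two canonical forms ($\vec{t}_{z_1}=\vec{t}_{z_2}$, or $\vec{t}_{z_3}$ a positive combination of the other two), and each form is shown to force some incorrect label to co-occur with probability $1$, i.e.\ $\epsilon=1$. You instead work directly: you fix a kernel vector, observe that any $z$ with $a_z\neq 0$ must give zero mass to the singleton $\{z\}$, and then use non-ambiguity to certify that specific two-term rows ($\{1,3\}$, $\{1,2\}$, then $\{2,3\}$) carry positive entries, propagating signs until the row equations become unsatisfiable. What your version buys is that it avoids the WLOG case split and makes the role of $\card{\truelabels}\leq 3$ completely explicit (removing the maximizer and one excluded label isolates a single column, pinning a sign); what the paper's version buys is a cleaner conceptual statement of \emph{why} dependence fails, namely that it forces deterministic co-occurrence of labels and hence $\epsilon=1$. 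Both are complete; the sign-propagation details you flagged as the main risk do check out, since a row indexed by a partial label $y$ has $T_{y,z}=0$ for every $z\notin y$, so the rows you use really are two-term.
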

\begin{proof}
We prove the case with $\|\truelabels\| \leq 3$ by proving its contraposition.
Suppose that $T$ is not left-invertible.
Then, the column vectors $\vec{t}_z$ of $T$ ($z\in\truelabels$) are not linearly independent; that is, there exists $\{ a_z \}_{z\in\truelabels}$ such that
\begin{align}
    \sum_{z\in\truelabels} a_z \vec{t}_z = \vec0
\end{align}
and at least one of $a_z$ is nonzero.
Because $\vec{t}_z\in\wldistr$, it follows that $\sum_{z\in\truelabels} a_z = 0$.
Therefore, without loss of generality, we can assume that one of the following two equations holds:
\begin{align}
    \vec{t}_{z_1} &= \vec{t}_{z_2}, \\
    \vec{t}_{z_3} &= a_{1} \vec{t}_{z_1} + a_2 \vec{t}_{z_2} \quad
    (a_1 > 0, a_2>0).
\end{align}
By noting that $(\vec{t}_z)_y = P(Y=y \vert Z=z)$ and that $z\in y$ if $P(Y=y\vert Z=z) > 0$,
we can see that the former implies
\begin{align}
    P(Y=\{ z_1, z_2 \} \vert Z=z_1) = P(Y=\{ z_1, z_2 \} \vert Z=z_2) = 1,
\end{align}
while the latter implies
\begin{align}
    P(z_3 \in Y \setminus \{ z_1 \} \vert Z=z_1) =P(z_3 \in Y \setminus \{ z_2 \} \vert Z=z_2) = 1.
\end{align}
In either case, we have $\epsilon = 1$, and therefore, the non-ambiguity condition is broken.

If $\|\truelabels\| = 4$, we can find an example $T$ that is left-invertible but non-ambiguous.
One such example is
\begin{align}
    T = \begin{pmatrix}
        0.5 & 0 & 0.5 & 0 \\
        0.5 & 0 & 0 & 0.5 \\
        0 & 0.5 & 0.5 & 0 \\
        0 & 0.5 & 0 & 0.5
    \end{pmatrix}.
    \label{eq:non-ambiguous-non-invertible}
\end{align}
If $\truelabels=\{1,2,3,4\}$ and $T$'s columns from left to right correspond to 1 to 4,
then the rows from top to bottom represent partial labels $(1,3)$, $(1,4)$, $(2,3)$, and $(2,4)$.
We can see that this is not left-invertible by noting that a nonzero vector $(1,1,-1,-1)\T$ is in the kernel of $T$.
On the other hand, the ambiguity degree $\epsilon$ is 0.5, and therefore, the non-ambiguity condition is satisfied.

We can also construct an example for an arbitrary $\|\truelabels\| > 4$ by using Eq.~\eqref{eq:non-ambiguous-non-invertible}.
For instance, the following block diagonal form of $T$ is non-invertible and non-ambiguous:
\begin{align}
    T = \begin{pmatrix}
    T_4 & 0 \\
    0 & I
    \end{pmatrix},
\end{align}
where $T_4$ is the label transition matrix for the first four labels given by Eq.~\eqref{eq:non-ambiguous-non-invertible}, and $I$ is an identity matrix with an appropriate size.
\end{proof}

\subsection{Weak Noise Condition in Learning from Noisy Labels}

Theoretical analyses of noisy-label learning have assumed that the noise rate is smaller than some threshold, which often coincides with the point at which $T$ is not left-invertible~\citep{angluin.laird1988,natarajan.etal2013}.
For example, \citep{natarajan.etal2013} considered label noise that has the following transition matrix:
\begin{align}
    T = \begin{pmatrix}
    1 - r_+ & r_- \\
    r_+ & 1 - r_-
    \end{pmatrix}.
\end{align}
They assumed that $r_+ + r_- < 1$, and $T$ ceases to be left-invertible at the boundary $r_+ + r_- =1$.
However, $T$ recovers left-invertibility for a noise rate above the threshold (e.g., complementary label learning, which can be seen as the extreme case in which labels are flipped with probability 1).
There, our framework is still applicable.

\section{Multiple Weak-Label Datasets}\label{apdx:multisource}

The arguments in the main text deal with scenarios with only one weak-label set $\weaklabels$ and an associated transition matrix $T$.
In this appendix, we show that without making formal changes, we can extend the formulation to scenarios with multiple samples having different noise characteristics.

Let $N$ be the number of training sets. They all have the same true-label set $\truelabels = \{ z_1, z_2, \dots , z_C \}$ and base distribution $p(x, z)$, but each has its own weak-label set $\weaklabels^{(d)} = \{ y_{1}^{(d)}, y_{2}^{(d)}, \dots , y_{C_d}^{(d)} \}$ and label transition matrix $T^{(d)}$.
We show that this problem can be mapped to a problem with a single weak-label set $\weaklabels=\cup_{d=1}^N \weaklabels^{(d)}$ and a label transition matrix.
A partial risk on the $d$th set is defined as $R_d[q(z\vert x)]\equiv\expect_{(x,y)\sim p^{(d)} (x,y)} [l_\mathrm{W}(q(z\vert x), y)]$, where $p^{(d)}(x,y) \equiv \sum_{z\in\truelabels} T^{(d)}_{yz} p(x,z)$. The total risk is defined as a convex combination of the partial risks:
\begin{align}
    R[q(z\vert x) ]
    &\equiv \sum_{d=1}^N \alpha_d R_d[q(z \vert x)] \\
    &\equiv \expect_{(x,y)\sim \sum_{z\in\truelabels}T_{yz}p(x, z)}
        [l_\mathrm{W}( q(z \vert x), y )],
\end{align}
where the coefficients $\alpha_d$ are positive real numbers satisfying $\sum_{d=1}^N\alpha_d = 1$,
and the total label transition matrix from $\truelabels$ to $\weaklabels$ is defined as $T=(\alpha_1 T^{(1)\mathsf{T}}, \alpha_2 T^{(2)\mathsf{T}}, \dots, \alpha_N T^{(N)\mathsf{T}})\T$.
In fact, the $\alpha_d$ may be absorbed in a weak-label loss, and we may simply set $\alpha_d = 1/N$ for all $d$. The equality $T\T \oney = \onez$ can be verified by using $T^{(d)\mathsf{T}} \one_{\weaklabels^{(d)}} = \onez$ for all $d=1,2,\dots,N$,
and therefore, $T$ is formally qualified as a transition matrix.

As in the discussion in the main text, we assume that $T$ is left-invertible.
This assumption is weaker than requiring that all of the $T^{(d)}$ be left-invertible.
By using this $T$ as a label transition matrix, we can formally treat the multiple-source scenario exactly the same as the single-source case. In the training phase, we need to calculate the empirical risk. This can be done by first calculating the empirical partial risks from respective training sets with a partial-label set $\weaklabels^{(d)}$ and then aggregating the results.

\begin{example}
Consider three-class classification from two weakly labeled datasets. One set is labeled by an annotator who distinguishes Class~1 from the other classes, and the other set, by another annotator who distinguishes Class~2 from the other classes. Such a scenario is represented by the following transition matrices:
\begin{align}
    T^{(1)} &= \begin{pmatrix}
        1 & 0 & 0 \\
        0 & 1 & 1
    \end{pmatrix}, \quad
    T^{(2)} = \begin{pmatrix}
        0 & 1 & 0 \\
        1 & 0 & 1
    \end{pmatrix}, \quad
    T = \begin{pmatrix}
        \frac{1}{2} & 0 & 0 & \frac{1}{2} \\
        0 & \frac{1}{2} & \frac{1}{2} & 0 \\
        0 & \frac{1}{2} & 0 & \frac{1}{2}
    \end{pmatrix}\T.
\end{align}
Here, $T$ is reconstructible, while $T^{(1)}$ and $T^{(2)}$ are not. An example of $R$ is
\begin{align}
    R = \begin{pmatrix}
    1 & -1 & 1 & 1 \\
    1 & 1 & 1 & -1 \\
    -1 & 1 & -1 & 1 
    \end{pmatrix}.
\end{align}
\end{example}

\section{Proofs Omitted in Main Text}
\subsection{Theorem~\ref{thm:dual-rep}}\label{prf:dual-rep}

We first prove the following lemma, which relates Condition~1 of the theorem to the finiteness of the convex conjugate of $F(\vv)$.

\begin{lemma}\label{lemma:finite-S(q)}
Let $F: C\subset\onezperp\to\real$ be a closed convex function.
Then its convex conjugate $F^\ast(\qq)$ is finite for all $\qq\in\tldistr$
    if and only if
    $\sup_{\vv\in C}\left[ \max_{z\in\truelabels} v_z - F(\vv) \right]
    < \infty$.
\end{lemma}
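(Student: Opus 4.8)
The plan is to recognize that both conditions in the statement are controlled by the single quantity $\sup_{\qq\in\tldistr}F^\ast(\qq)$, and then to exploit the fact that $\tldistr$ is a \emph{polytope}, not merely a compact convex set. The key observation is that $\max_{z\in\truelabels}v_z$ is exactly the support function of the simplex evaluated at $\vv$: a linear functional attains its maximum over $\tldistr$ at a vertex, and the vertices of $\tldistr$ are the one-hot vectors $\vec{e}_z$, so $\max_{z\in\truelabels}v_z=\max_{\qq\in\tldistr}\inner{\qq}{\vv}$. (Note $\vec{e}_z\in\onezperp$ is false, but $\vec{e}_z\in\tldistr$ and the inner product is taken in the ambient $\real^{\truelabels}$, so the identity is unaffected.)

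First I would substitute this identity into the supremum in the statement and interchange the two suprema (swapping a sup with a sup is always valid), which gives
\begin{align}
\sup_{\vv\in C}\Bigl[\max_{z\in\truelabels}v_z-F(\vv)\Bigr]
&=\sup_{\vv\in C}\sup_{\qq\in\tldistr}\bigl[\inner{\qq}{\vv}-F(\vv)\bigr] \\
&=\sup_{\qq\in\tldistr}\sup_{\vv\in C}\bigl[\inner{\qq}{\vv}-F(\vv)\bigr]
=\sup_{\qq\in\tldistr}F^\ast(\qq).
\end{align}
Thus the supremum condition in the lemma says precisely that $F^\ast$ is \emph{bounded above} on $\tldistr$, and the equivalence reduces to showing that $F^\ast$ is finite at every point of $\tldistr$ if and only if it is bounded above on $\tldistr$. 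Since $C$ is nonempty, $F^\ast>-\infty$ everywhere, so ``finite'' just means ``$<\infty$.'' The forward implication (bounded above $\Rightarrow$ finite pointwise) is then immediate.

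The hard part, and the only step requiring real work, is the reverse implication: upgrading pointwise finiteness on $\tldistr$ to a uniform finite upper bound. Here I would use that $F^\ast$ is convex (being a convex conjugate) together with the finite-vertex structure of the simplex. For any $\qq=\sum_{z\in\truelabels} q_z\vec{e}_z\in\tldistr$, written as a convex combination of vertices, Jensen's inequality yields $F^\ast(\qq)\le\sum_{z\in\truelabels} q_z F^\ast(\vec{e}_z)\le\max_{z\in\truelabels}F^\ast(\vec{e}_z)$. Because each vertex $\vec{e}_z$ lies in $\tldistr$, the hypothesis makes every $F^\ast(\vec{e}_z)$ finite, so $\sup_{\qq\in\tldistr}F^\ast(\qq)\le\max_{z\in\truelabels}F^\ast(\vec{e}_z)<\infty$; combining with the reformulation above closes the equivalence. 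I expect this vertex bound to be the crux: a convex function that is finite on a \emph{general} compact convex set need not be bounded above (it can blow up along a sequence of extreme points, as on a disk), so the argument genuinely relies on $\tldistr$ having only finitely many extreme points rather than on mere compactness.
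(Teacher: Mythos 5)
Your proof is correct and follows essentially the same route as the paper's: the paper likewise reduces finiteness of $F^\ast$ on $\tldistr$ to finiteness at the vertices via Jensen's inequality, and identifies $\sup_{\vv\in C}[\max_z v_z - F(\vv)]$ with $\max_z F^\ast(\vec{e}_z)$ by swapping the supremum with the (finite) maximum, which is just your support-function computation specialized to the vertices. The two arguments differ only in presentation order, not in substance.
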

\begin{proof}
Without loss of generality, the condition that $F^\ast(\qq)<\infty$ for all $\qq\in\tldistr$ can be replaced with the finiteness at $\qq=\vec{e}_y$ for all $y\in\weaklabels$,
where $\vec{e}_y \in \tldistr$ is a standard unit vector.
This is because of Jensen's inequality and the fact that $\tldistr$ is the convex hull of a set of the standard unit vectors.
Then, the lemma can be seen as a special case of Proposition~\ref{prop:equiv} with $\weaklabels=\truelabels$ and $R=I_\truelabels$.
\end{proof}

\begin{proof}[Proof of Theorem~\ref{thm:dual-rep}]

Suppose that $l$ is a regular proper loss. From Theorem~4, there exists a closed convex function ${S}: \tldistr\to\real$ and its subgradient function $\subgrad S: \tldistr\to\extreal^\truelabels$ such that
\begin{align}
    l(\qq,z) = - [ \subgrad S(\qq) ]_z
        + \inner{\qq}{\subgrad S(\qq)} - S(\qq).
\end{align}
Let $S^\ast: \Tilde{\domain} \to \real$ be the convex conjugate of $S$.
From the Savage representation and the identity $S^\ast(\subgrad S(\qq)) + S(\qq) = \inner{\qq}{\subgrad S(\qq)}$, which follows from the equality condition of the Fenchel-Young inequality, we have $l(\qq, z) = \lambda_{S^\ast}(\subgrad S(\qq), z)$.

Now we need to show that the restriction of $S^\ast$ to $\domain\equiv \Tilde{\domain} \cap \onezperp$, denoted as $F: \domain\to\real$, satisfies the two conditions of the theorem.
Suppose that $\vv = v_\parallel \onez + \vv_\perp$, where $v_\parallel\in\real$ and $\vv_\perp\in\onezperp$.
Because $\inner{\qq}{\vv} = v_\parallel + \inner{\qq}{\vv_\perp}$ for $\qq\in\tldistr$, it holds that
for all $\vv \in \tilde{\domain}$,
\begin{align}
    S^\ast(\vv)
    &\equiv \sup_{\qq\in\tldistr} \left[
        \inner{\qq}{\vv} - S(\qq) \right] \\
    &= v_\parallel + {F}(\vv_\perp).
\end{align}
This implies that
\begin{align}
\inner{\qq}{\vv} - S^\ast(\vv) = \inner{\qq}{\vv_\perp} - F(\vv_\perp)
\end{align}
for all $\qq\in\tldistr$ and $\vv\in\Tilde{\domain}$.
By taking the supremum of this equality over $\vv\in\tilde{\domain}$, we conclude that $F^\ast(\qq) = S(\qq)$ for all $\qq\in\tldistr$, where $F^\ast$ is the convex conjugate of $F$.
Because $S(\qq)$ is finite for all $\qq\in\tldistr$, $F^\ast(\qq)$ is also finite in $\tldistr$.
By Lemma~\ref{lemma:finite-S(q)}, this implies Condition~1 of the theorem.

To show Condition~2, we need to relate the subgradients of $S(\qq)$ with those of $F^\ast(\qq)$.
We first note that they have the same projections of the subgradients onto $\onezperp$, because $F^\ast(\qq)=S(\qq)$ in $\tldistr$.
Regarding the component of the subgradients that is parallel to $\onez$, it holds that $\inner{\onez}{\subgrad F^\ast(\qq)}=0$ because $F^\ast(\qq)$ is independent of $\inner{\qq}{\onez}$; that is, for all $t\in\real$,
\begin{align}
    F^\ast(\qq + t\onez)
    &= \sup_{\vv\in\domain} \left[
        \inner{\qq + t\onez}{\vv} - F(\vv)
    \right] \\
    &= \sup_{\vv\in\domain} \left[
        \inner{\qq}{\vv} - F(\vv)
    \right] \\
    &= F^\ast(\qq).
\end{align}
On the other hand, because $S$ is defined on $\tldistr$, it holds that $\vv + t \onez \in \partial S(\qq)$ for all $\vv\in \partial S(\qq)$ and $t\in\real$.
The choice of this $t$ does not affect the loss function's value.
Therefore, we can always choose $\subgrad S(\qq)$ such that $\subgrad S(\qq)=\subgrad F^\ast(\qq)$, which implies Condition~2.

Conversely, suppose that there exists a closed convex function $F:C\subset\onezperp\to\real$ that satisfies the two conditions.
Its convex conjugate $F^\ast$ is finite at all $\pp\in\tldistr$ by Lemma~\ref{lemma:finite-S(q)}.
Let $S: \tldistr\to\real$ be a restriction of $F^\ast$ on $\tldistr$.
In general, a subdifferential of a function is not larger as a set than a subdifferential of its restriction; that is, $\partial F^\ast(\qq) \subset \partial S(\qq)$.
This implies that $\subgrad F^\ast(\qq)$ can be seen as a subgradient function of $S$.
From this fact and the identity $F(\subgrad F^\ast(\qq)) + F^\ast(\qq) = \inner{\qq}{\subgrad F^\ast(\qq)}$, which follows from the equality condition of the Fenchel-Young inequality~\citep{rockafellar1996}, the loss $l(\qq,z) = \lambda_F(\subgrad F^\ast(\qq), z)$ conforms to the Savage representation and is proper.
\end{proof}

\subsection{Proposition~\ref{prop:min-in-domain}}\label{prf:min-in-domain}

Let $\vv_0$ be a minimizer of $\expect_{z\sim\pp} \left[
        \lambda_F(\vv, z)
    \right]$.
Then it holds that
\begin{align}
    - \inner{\vv_0}{\pp} + F(\vv_0)
    &= \min_{\vv\in\domain} \left[
        -\inner{\vv}{\pp} + F(\vv)
    \right] \\
    &= -F^\ast(\pp),
\end{align}
where $\vv_0 \in \partial F^\ast (\pp)$.
This proves the claim because $\pp$ is always a member of $\tldistr$.

\subsection{Theorem~\ref{thm:T-proper-dual-rep}}\label{prf:T-dual}

Our proof of this theorem relies on the following lemma that gives a general relation between $T$-properness and properness~\citep{cid-sueiro2012}.

\begin{lemma}\label{thm:equiv-loss}
A weak-label loss $l_\mathrm{W}: \tldistr\times\weaklabels \to \extreal$ is (strictly) \tproper if and only if a loss function $l: \tldistr\times\truelabels \to \extreal$, defined as $l(\qq, z) = \sum_{y\in\weaklabels} T_{yz} l_\mathrm{W}(\qq, y)$, is (strictly) proper.
\end{lemma}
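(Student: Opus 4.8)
The plan is to observe that the defining inequality for $T$-properness of $l_\mathrm{W}$ and the defining inequality for properness of the induced loss $l$ are literally the same inequality, once the expectation taken over $T\pp$ is rewritten as an expectation over $\pp$ by interchanging the order of summation. No convex-analytic machinery is needed; the whole content is a rearrangement of a finite double sum.

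First I would expand the left-hand side of the $T$-properness condition in Eq.~\eqref{eq:def-T-proper} as a finite sum and substitute $(T\pp)_y = \sum_{z\in\truelabels} T_{yz} p_z$, which gives
\begin{align}
\expect_{y\sim T\pp}[l_\mathrm{W}(\qq, y)]
&= \sum_{y\in\weaklabels} (T\pp)_y\, l_\mathrm{W}(\qq, y) \\
&= \sum_{z\in\truelabels} p_z \sum_{y\in\weaklabels} T_{yz}\, l_\mathrm{W}(\qq, y) \\
&= \sum_{z\in\truelabels} p_z\, l(\qq, z)
= \expect_{z\sim\pp}[l(\qq, z)],
\end{align}
where the third equality uses the definition $l(\qq, z) = \sum_{y\in\weaklabels} T_{yz} l_\mathrm{W}(\qq, y)$. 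The identical computation with $\qq$ replaced by $\pp$ rewrites the right-hand side of Eq.~\eqref{eq:def-T-proper} as $\expect_{z\sim\pp}[l(\pp, z)]$.

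With this identity in hand, the conclusion is immediate. For every fixed pair $\pp, \qq \in \tldistr$, the inequality $\expect_{y\sim T\pp}[l_\mathrm{W}(\qq, y)] \geq \expect_{y\sim T\pp}[l_\mathrm{W}(\pp, y)]$ coincides term-for-term with $\expect_{z\sim\pp}[l(\qq, z)] \geq \expect_{z\sim\pp}[l(\pp, z)]$, which is the properness condition of Eq.~\eqref{eq:def-proper} at fixed $x$. Hence $l_\mathrm{W}$ satisfies the former for all $\pp, \qq$ if and only if $l$ satisfies the latter for all $\pp, \qq$, which is exactly the asserted equivalence. The strict version transfers by the same identity: since the two conditional risks are equal for each $(\pp, \qq)$, equality holds in one inequality precisely when it holds in the other, so the clause ``equality only if $\pp=\qq$'' holds for $l_\mathrm{W}$ if and only if it holds for $l$.

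The one point requiring care, and the only (minor) obstacle I anticipate, is the handling of extended-real values: because $l_\mathrm{W}$ may take values in $\extreal$, the interchange of the two finite sums must be justified to avoid an indeterminate $\infty - \infty$. Since every coefficient $T_{yz}$ and $p_z$ is nonnegative and the index sets $\weaklabels$ and $\truelabels$ are finite, the rearrangement is valid whenever the sums are well-defined in $\extreal$; I would simply note that the proper/$T$-proper comparison is only meaningful when these conditional risks are themselves well-defined, so no genuine difficulty arises.
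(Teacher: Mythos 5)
Your proposal is correct and follows essentially the same route as the paper, which also reduces the lemma to the identity $\expect_{y\sim T\pp}[l_\mathrm{W}(\qq,y)] = \expect_{z\sim\pp}\bigl[\sum_{y\in\weaklabels} T_{yz}\, l_\mathrm{W}(\qq,y)\bigr]$ obtained by interchanging the two finite sums. Your added remarks on the strict case and on extended-real values are sound and only make the argument more careful than the paper's one-line justification.
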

This lemma follows from the identity $\expect_{y\sim T\pp} [l_\mathrm{W}(\qq,y)] = \expect_{z\sim \pp} [\sum_{y\in\weaklabels} T_{yz} l_\mathrm{W}(\qq,y)]$.
The left-hand (right-hand) side is minimized by $\qq=\pp$ if and only if the loss in the expected value is $T$-proper (proper).
The lemma indicates that having corrupted labels and a weak-label loss is equivalent to having clean labels and a mixed weak-label loss as a supervised-learning loss.

\begin{proof}[Proof of Theorem~\ref{thm:T-proper-dual-rep}]
From Lemma~\ref{thm:equiv-loss}, the weak-label loss $l_\mathrm{W}$ is $T$-proper if and only if a loss defined as ${l}(\qq, z) \equiv \sum_{y\in\weaklabels} T_{yz} {l}_\mathrm{W}(\qq, y)$ is proper. Then, from Theorem~5, there exists a closed convex function $F$ defined on a subset of $\onezperp$ that satisfies Condition~1 of the theorem and the following equation:
\begin{align}
    - [\subgrad F^\ast(\qq)]_z + F(\subgrad F^\ast(\qq))
    = \sum_{y\in\weaklabels} T_{yz} {l}_\mathrm{W}(\qq, y),
\end{align}
where $F^\ast$ is the convex conjugate of $F$ and $\subgrad F^\ast(\qq)$ is a subgradient of $F^\ast$ at a point $\qq$. By using the identity $T\T \oney = \onez$, we find that
\begin{align}
    - [\subgrad F^\ast(\qq)]_z = \sum_{y\in\weaklabels} T_{yz} [
    {l}_\mathrm{W}(\qq, y) - F(\subgrad F^\ast(\qq)) ].
\end{align}
Note that the right-hand side is a product of a matrix $T\T$ and a vector in $\real^\weaklabels$, whose $y$th component is $[{l}_\mathrm{W}(\qq, y) - F(\subgrad F^\ast(\qq)) ]$.
By the left-invertibility of $T$, we can invert this equation up to possibly nonzero $\vec{\Delta}(\qq)\in \coker T$.
\end{proof}

\subsection{Lemma~\ref{lemma:max-Rv-max-v}}\label{prf:maxRv}

Let $T$ be a label transition matrix corresponding to a reconstruction matrix $R$.
Then, $\vv = T\T (R\T \vv)$ for $\vv\in\onezperp$.
Because all the elements of $T$ are nonnegative and $T\T \oney = \onez$, a component of $\vv$ is a convex combination of $(R\T\vv)_y$.
Therefore, $v_z \leq \max_{y\in\weaklabels} ( R\T \vv )_y$ for all $z\in\truelabels$.

\subsection{Proposition~\ref{prop:equiv}}\label{prf:equiv}

Suppose that $F^\ast(R\vec{e}_y) < \infty$ for all $y\in\weaklabels$.
By the definition of the convex conjugate, it follows that
\begin{align}
    F^\ast(R\vec{e}_y)
    &\equiv \sup_{\vv\in\domain} [\inner{R\vec{e}_y}{\vv} - F(\vv)] \\
    &=  \sup_{\vv\in\domain} [\inner{\vec{e}_y}{R\T\vv} - F(\vv)] \\
    &=  \sup_{\vv\in\domain} [(R\T\vv)_y - F(\vv)] \\
    &< \infty.
\end{align}
By maximizing both sides over $y\in\weaklabels$, we obtain
\begin{align}
    \max_{y\in\weaklabels} F^\ast (R\vec{e}_y)
    &= \max_{y\in\weaklabels} \sup_{\vv\in\domain} \left[
        (R\T\vv)_y - F(\vv)
    \right] \\
    &= \sup_{\vv\in\domain} \left[
        \max_{y\in\weaklabels} (R\T\vv)_y - F(\vv)
    \right] \\
    &< \infty.
\end{align}

Conversely, suppose that
    $\sup_{\vv\in C} \left[ \max_{y\in\weaklabels} (R\T\vv)_y - F(\vv) \right]
    < \infty$.
Because $\inner{\qq}{R\T\vv} \leq \max_{y\in\weaklabels} (R\T\vv)_y$
    for all $\qq\in\wldistr$, it follows that
\begin{align}
    F^\ast(R\qq) &\equiv \sup_{\vv\in C} \left[
        \inner{R\qq}{\vv} - F(\vv)
    \right] \\
    &\equiv \sup_{\vv\in C} \left[
        \inner{\qq}{R\T\vv} - F(\vv)
    \right] \\
    &\leq \sup_{\vv\in C} \left[
        \max_{y\in\weaklabels} (R\T\vv)_y - F(\vv)
    \right] \\
    &< \infty.
\end{align}
The proposition follows as the special case with $\qq=\vec{e}_y$.

\section{Condition for Strict Properness}\label{apdx:strictly-proper}

In this appendix, we prove the conditions for a dual representation to give a strictly proper loss.

As already stated in Theorem~\ref{thm:savage}, a proper loss is strictly proper if and only if the associated negative Bayes risk $S(\pp)$ is strictly convex.
To dualize this condition, we introduce some notations.
Let $\truelabels'=\{ z'_1, z'_2, \dots , z'_{K'} \}$ be a subset of the label set $\truelabels=\{ z_1, z_2, \dots, z_K \}$.
Then, a mapping $\pi_{\truelabels'}: \real^\truelabels \to \real^{\truelabels'}$ denotes a natural projection from $\real^\truelabels$ onto $\real^{\truelabels'}$,
and $\rho_{\truelabels'}: \real^{\truelabels'} \to \one_{\truelabels'}^\perp$ denotes an orthogonal projection from $\real^{\truelabels'}$ onto its subspace $\one_{\truelabels'}^\perp$.
We also need a ``projection onto the bottom,'' $\sigma_{\truelabels'}: \one_{\truelabels'}^\perp \to \real^{\truelabels'-\{z'_1\}}$, which is a natural projection from $\one_{\truelabels'}^\perp$, as a subspace of $\real^{\truelabels'}$, onto $\real^{\truelabels'-\{z'_1\}}$. Let $F: \domain\to \real$ be a function whose domain is a convex subset $\domain$ of $\onezperp$.
Then, we define a function $F_{\truelabels'}: \domain_{\truelabels'} \to \real$ as the closure of
\begin{align}
    \tilde{F}_{\truelabels'}(\vv)
    \equiv 
    \inf_{\vv' \in \domain \cap \pi_{\truelabels'}^{-1} \circ \rho_{\truelabels'}^{-1}(\vv)}
    \left[
        F(\vv')
        - \inner{\frac{1}{\card{\truelabels'}}\one_{\truelabels'}}{\pi_{\truelabels'}(\vv')}
   \right],
\end{align}
where $\domain_{\truelabels'} \equiv \rho_{\truelabels'} \circ \pi_{\truelabels'} (\domain)$.

\begin{theorem}\label{thm:strictly-proper}
A proper loss in the dual representation associated with a closed convex function $F$ is strictly proper if and only if for all subsets $\truelabels'$ of $\truelabels$, a function $F_{\truelabels'} \circ \sigma^{-1}_{\truelabels'}$ is differentiable on some subset $\setD_{\truelabels'}$ of $\sigma_{\truelabels'} (\domain_{\truelabels'})$ and the range of $\partial F_{\truelabels'}$ on  $\sigma^{-1}_{\truelabels'}(\setD_{\truelabels'})$ contains the relative interior of $\distr(\truelabels')$.
\end{theorem}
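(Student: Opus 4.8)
The plan is to reduce strict properness, via the Savage representation (Theorem~\ref{thm:savage}) together with the identification $F^\ast = S = -\underline{L}$ on $\tldistr$ from Theorem~\ref{thm:dual-rep}, to strict convexity of the negative Bayes risk $S$ on the simplex $\tldistr$, and then to dualize this condition face by face. First I would record the elementary reduction: $S$ is strictly convex on $\tldistr$ if and only if, for every subset $\truelabels'\subseteq\truelabels$, the restriction $S|_{\distr(\truelabels')}$ is strictly convex on the relative interior of the face $\distr(\truelabels')$. The forward direction is immediate; for the converse, given $\pp_0\neq\pp_1$ in $\tldistr$ one takes $\truelabels'=\operatorname{supp}(\pp_0)\cup\operatorname{supp}(\pp_1)$, notes that the open segment between them lies in the relative interior of $\distr(\truelabels')$, and invokes strict convexity on that face. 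This is exactly why the theorem quantifies over all subsets $\truelabels'$ rather than only over $\truelabels$ itself.

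Second, I would show that each face-restricted Bayes risk is governed by $F_{\truelabels'}$. For $\pp\in\distr(\truelabels')$ one has $p_z=0$ off $\truelabels'$, so $\inner{\pp}{\vv}=\inner{\pi_{\truelabels'}(\pp)}{\pi_{\truelabels'}(\vv)}$; decomposing $\pi_{\truelabels'}(\vv)$ into its component along $\one_{\truelabels'}$ and its component in $\one_{\truelabels'}^\perp$, and using $\inner{\pi_{\truelabels'}(\pp)}{\one_{\truelabels'}}=1$, turns $S(\pp)=F^\ast(\pp)$ into a supremum that, after grouping the variable $\vv$ by the value $\vec{u}=\rho_{\truelabels'}\pi_{\truelabels'}(\vv)$, is exactly the inf-projection appearing in the definition of $\tilde F_{\truelabels'}$. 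Since closure does not change the conjugate, this yields $S(\pp)=F_{\truelabels'}^\ast(\rho_{\truelabels'}\pi_{\truelabels'}(\pp))$ on $\distr(\truelabels')$. Because $\rho_{\truelabels'}\circ\pi_{\truelabels'}$ is an affine isomorphism from $\aff\distr(\truelabels')$ onto $\one_{\truelabels'}^\perp$, and $\sigma_{\truelabels'}$ is a linear isomorphism onto the Euclidean space $\real^{\truelabels'-\{z'_1\}}$, strict convexity and differentiability transfer verbatim through these maps; the role of $\sigma_{\truelabels'}$ is precisely to put $F_{\truelabels'}$ into coordinates where ordinary differentiability and singleton subdifferentials coincide.

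Third, the problem becomes: $F_{\truelabels'}^\ast$ is strictly convex on the embedded relative interior of $\distr(\truelabels')$ if and only if $F_{\truelabels'}\circ\sigma_{\truelabels'}^{-1}$ is differentiable on a set $\setD_{\truelabels'}$ whose gradient image contains that relative interior. Here I would invoke the Legendre-type duality between essential smoothness and essential strict convexity of conjugate pairs~\citep{rockafellar1996}: strict convexity of $F_{\truelabels'}^\ast$ at a point $\qq$ corresponds to $\qq$ being realized as a gradient $\nabla F_{\truelabels'}(\vec{u})$ at a point $\vec{u}$ of differentiability, while a segment on which $F_{\truelabels'}^\ast$ is affine corresponds to a common subgradient $\vec{u}^\ast$ of $F_{\truelabels'}^\ast$ along the segment, i.e. a point $\vec{u}^\ast$ whose subdifferential $\partial F_{\truelabels'}(\vec{u}^\ast)$ contains the whole segment and is therefore not a singleton. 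Covering the relative interior by gradients at differentiable points thus rules out exactly these affine pieces.

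The main obstacle is the localization in the third step. The standard smooth/strictly-convex duality is global (``essential''), whereas here $F_{\truelabels'}^\ast=S$ is only required to be strictly convex on the relative interior of the simplex face, and the subgradients of $F_{\truelabels'}$ may well leave $\distr(\truelabels')$ (the same phenomenon noted in the main text, that $F^\ast$ extends beyond $\tldistr$). The delicate point is therefore to match the specific points $\vec{u}$ with $\sigma_{\truelabels'}(\vec{u})\in\setD_{\truelabels'}$ --- those whose gradients land in $\mathrm{ri}\,\distr(\truelabels')$ --- to strict convexity of $F_{\truelabels'}^\ast$ on that relative interior only, neither demanding differentiability of $F_{\truelabels'}$ where its gradient exits the face nor claiming strict convexity of $F_{\truelabels'}^\ast$ outside it. Establishing this two-sided correspondence cleanly, through the relative interior of the domain of $F_{\truelabels'}^\ast$ and the singleton-subdifferential criterion for differentiability, is where the real work lies; the face reduction and the partial-conjugate computation are comparatively routine.
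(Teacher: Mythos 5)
Your proposal is correct and follows essentially the same route as the paper's proof: reduce strict properness to strict convexity of $S=F^\ast$ on $\tldistr$, decompose face by face over the relatively open strata of the simplex, identify the face-restricted Bayes risk as the conjugate of the partial-infimum function $F_{\truelabels'}$, and convert strict convexity into injectivity of the subdifferential map $\partial S_{\truelabels'}=(\partial F_{\truelabels'})^{-1}$, resolved via the singleton-subdifferential criterion for differentiability of $F_{\truelabels'}\circ\sigma_{\truelabels'}^{-1}$. The ``delicate localization'' you flag is exactly the content of the paper's final lemma, which handles it by noting that $\partial F_{\truelabels'}(\vv)$ is a union of lines in the $\one_{\truelabels'}$ direction, so that a singleton subgradient after quotienting by $\sigma_{\truelabels'}\circ\rho_{\truelabels'}$ pins down a unique element of the relative interior of $\distr(\truelabels')$.
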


\subsection{Proof}

Discussing strict convexity on a closed set $\tldistr$
    is not straightforward.
Instead, the following lemma allows us
    to decompose it into strict convexity on open subsets of $\tldistr$.
\begin{lemma}\label{lemma:strict-conv-diff-injectivity}
A function $S$ is strictly convex on $\tldistr$ if and only if
    it is strictly convex on a convex, relatively open subset
    $P_\truelabels(\truelabels')
        \equiv \{ \pp\in\tldistr \vert
        p_z \neq 0 \  (z\in\truelabels'), 
        p_z = 0 \  (z\notin\truelabels')\}$
    for any $\truelabels' \subset \truelabels$.
\end{lemma}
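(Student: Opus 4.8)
The plan is to prove the two implications separately, treating $S$ throughout as a convex function on $\tldistr$: in the intended application $S$ is the negative Bayes risk, i.e.\ the restriction to $\tldistr$ of the closed convex conjugate $F^\ast$, and is therefore convex. Convexity on all of $\tldistr$ is genuinely needed here, since a function can be strictly convex on every relative face interior yet fail convexity across a boundary, which would break the backward direction. Granting convexity, the forward implication is immediate: each $P_\truelabels(\truelabels')$ is itself convex, because if $\pp_0,\pp_1$ both have support exactly $\truelabels'$ and $\lambda\in(0,1)$ then $(1-\lambda)\pp_0+\lambda\pp_1$ again has support exactly $\truelabels'$; hence the defining strict inequality, assumed for all distinct points of $\tldistr$, holds a fortiori for distinct points of $P_\truelabels(\truelabels')$.

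For the converse I would argue by contraposition: assume $S$ is convex but not strictly convex on $\tldistr$, so that the convexity inequality holds with equality at some distinct $\pp_0,\pp_1\in\tldistr$ and some $\lambda_0\in(0,1)$. The first step is the standard convex-analysis fact that a single such equality forces affineness on the entire segment: setting $g(\mu)=S((1-\mu)\pp_0+\mu\pp_1)$, the convex one-dimensional function $g$ meets its own chord at the interior point $\lambda_0$, which forces $g$ to coincide with that chord on all of $[0,1]$. The second, geometric, step pushes a witnessing subsegment into a single support class: every point $(1-\mu)\pp_0+\mu\pp_1$ with $\mu\in(0,1)$ has support exactly $\truelabels''\equiv\mathrm{supp}(\pp_0)\cup\mathrm{supp}(\pp_1)$, so the entire relatively open segment lies inside $P_\truelabels(\truelabels'')$. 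Choosing two distinct points on this open segment, $S$ is affine on the segment joining them, which exhibits the failure of strict convexity on $P_\truelabels(\truelabels'')$ and closes the contrapositive.

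The only real subtlety, and hence the main obstacle, is the bookkeeping in the converse: the endpoints $\pp_0,\pp_1$ may themselves lie on lower-dimensional faces, so the original pair need not belong to any common $P_\truelabels(\truelabels')$ and cannot be used directly. The resolution is that the support is constant, and equal to the union $\truelabels''$, along the relative interior of the segment, while the affineness-propagation step guarantees that a subsegment taken from that interior still witnesses non-strict-convexity. Degenerate cases, such as $\truelabels'$ being a singleton where strict convexity is vacuous, need no separate treatment.
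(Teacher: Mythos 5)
Your proof is correct and rests on the same two ingredients as the paper's: the global convexity of $S$ on $\tldistr$, and the observation that the relative interior of the segment joining any two points of $\tldistr$ has constant support (the union of the endpoints' supports) and hence lies in a single cell $P_\truelabels(\truelabels'')$. The paper argues the converse directly---introducing two auxiliary interior points on the segment and chaining the strict inequality from cell-wise strict convexity with the convexity inequality out to the endpoints---whereas you take the contrapositive and invoke affineness propagation along the chord; these are the same argument in different packaging, and your explicit remark that convexity of $S$ on all of $\tldistr$ is indispensable for the backward direction makes precise a hypothesis the paper's statement leaves implicit but its proof silently uses.
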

\begin{proof}
Suppose that $S$ is strictly convex on $\tldistr$.
Clearly, it is strictly convex on any convex subset of $\tldistr$.
Conversely, suppose that $S$ is strictly convex on
    $P_\truelabels(\truelabels')$
    for any subset $\truelabels'$ of $\truelabels$.
Let $\pp_1$ and $\pp_2$ be two different elements of $\tldistr$,
    let $l$ be a line segment connecting them, and
    let $\lambda$, $\lambda_1$, and $\lambda_2$ be real numbers such that
    $0< \lambda_2 < \lambda < \lambda_1 < 1$.
Also, define
    $\pp = \lambda \pp_1 + (1-\lambda) \pp_2$
    and $\pp_i' = \lambda_i \pp_1 + (1-\lambda_i) \pp_2$ ($i=1,2$).
We can see that $\pp$ is also a convex combination of $\pp_1'$ and $\pp_2'$:
\begin{align}
    \pp = \lambda' \pp_1' + (1-\lambda') \pp_2',
    \quad\text{where}\quad
    \lambda' = \frac{\lambda_2 - \lambda}{\lambda_1 - \lambda_2}.
\end{align}
Because the relative interior of $l$ is contained
    in only one of the $P_\truelabels(\truelabels')$,
    $\pp$, $\pp_1'$, and $\pp_2'$ are all
    contained in that $P_\truelabels(\truelabels')$.
Therefore, by assumption,
\begin{align}
    S(\pp)
    &< \lambda' S(\pp_1') + (1-\lambda') S(\pp_2') \\
    &\leq \lambda' [\lambda_1 S(\pp_1) + (1 - \lambda_1) S(\pp_2)]
    + (1-\lambda') [\lambda_2 S(\pp_1) + (1-\lambda_2) S(\pp_2)] \\
    &= \lambda S(\pp_1) + (1-\lambda) S(\pp_2).
\end{align}
As this holds true for any $\pp_1$ and $\pp_2$ in $\tldistr$
    and $\lambda \in (0,1)$,
    $S(\pp)$ is strictly convex on $\tldistr$.
\end{proof}

We now focus on a single $\truelabels'\subset\truelabels$ and $S$ restricted on $P_\truelabels(\truelabels')$.
Because $\inner{\pp}{\vv}=\inner{\pi_{\truelabels'}(\pp)}{\pi_{\truelabels'}(\vv)}$
    for any $\pp\in P_\truelabels(\truelabels')$
    and $\vv\in \domain$,
    we can define a function $S_{\truelabels'}$
    as
\begin{align}
    S_{\truelabels'}(\pp)
    \equiv \sup_{\vv\in C} [
        \inner{\pp}{\pi_{\truelabels'}(\vv)}
        - F(\vv)
    ]
\end{align}
for $\pp\in\pi_{\truelabels'}(P_\truelabels(\truelabels'))$,
and this function is equal to $S(\pp)$ in $\pi_{\truelabels'}(P_\truelabels(\truelabels'))$.
Clearly, $S_{\truelabels'}$ is strictly convex
    if and only if $S$ is strictly convex on $P_\truelabels(\truelabels')$.
Note that $\pi_{\truelabels'}(P_\truelabels(\truelabels'))$
    is the relative interior of $\distr(\truelabels')$,
    and that the above definition is applicable to points
    within the relative boundary of $P_\truelabels(\truelabels')$.
Therefore, $S_{\truelabels'}$ can be extended
    to a function on $\distr(\truelabels')$.

\begin{lemma}\label{lemma:strict-conv-subdiff-injectivity}
A convex function $f$ is strictly convex
    on a convex, relatively open subset $\domain$ of its domain
    if and only if
    $\partial f (\pp_1) \cap \partial f(\pp_2) =\emptyset$
    for any pair of two different points $\pp_1, \pp_2$
    in $\domain$.\footnote{The condition that $\domain$ is relatively open
    can be removed, in which case $\partial f(\pp)$ can be empty
    for some $\pp$.
    See, for example, Theorem~26.3 in \citet{rockafellar1996}.}
\end{lemma}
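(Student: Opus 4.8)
The plan is to prove both implications, treating the forward direction (\emph{strict convexity} $\Rightarrow$ disjoint subdifferentials) as the easy half and the converse as the substantive half. Throughout, the key structural fact I would invoke is that a convex function fails to be strictly convex on $\domain$ precisely when it is \emph{affine} along some nondegenerate segment contained in $\domain$; I would record this as a preliminary observation, since a one-dimensional convex function that meets its chord at an interior point must coincide with that chord on the whole interval. The relative openness of $\domain$ enters only to guarantee, via a standard result (e.g.\ Theorem~23.4 in \citet{rockafellar1996}), that $\partial f(\pp)$ is nonempty at every point of $\domain$; this is exactly what the footnote flags.

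For the forward direction I would argue by contraposition. Suppose some $\vec{g}$ lies in $\partial f(\pp_1)\cap\partial f(\pp_2)$ with $\pp_1\neq\pp_2$. Writing the defining subgradient inequality at $\pp_1$ evaluated at $\pp_2$, and at $\pp_2$ evaluated at $\pp_1$, and adding the two, I obtain $0\geq 0$, which forces both inequalities to hold with equality. Feeding the midpoint $\vec{m}=\tfrac12(\pp_1+\pp_2)$ into the subgradient inequality at $\pp_1$ and substituting the now-exact value of $\inner{\vec{g}}{\pp_2-\pp_1}$ then yields $f(\vec{m})\geq\tfrac12\left[f(\pp_1)+f(\pp_2)\right]$, which contradicts the strict-convexity inequality $f(\vec{m})<\tfrac12\left[f(\pp_1)+f(\pp_2)\right]$.

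For the converse I would again use contraposition: assuming $f$ is \emph{not} strictly convex, the preliminary observation supplies endpoints $\xx_0\neq\xx_1$ in $\domain$ with $f$ affine on the segment $[\xx_0,\xx_1]$. Picking two distinct interior points $\pp_1,\pp_2$ of this segment and a subgradient $\vec{g}\in\partial f(\pp_1)$ (nonempty by relative openness), I would first restrict the subgradient inequality to the segment to pin down the directional derivative, showing $\inner{\vec{g}}{\xx_1-\xx_0}$ equals the constant affine slope of $f$ along the segment; here I use that $\pp_1$ is interior, so the segment extends on both sides of it. The decisive computation is then that the two affine minorants $\vec{y}\mapsto f(\pp_1)+\inner{\vec{g}}{\vec{y}-\pp_1}$ and $\vec{y}\mapsto f(\pp_2)+\inner{\vec{g}}{\vec{y}-\pp_2}$ are \emph{identical}, because their difference $f(\pp_2)-f(\pp_1)-\inner{\vec{g}}{\pp_2-\pp_1}$ vanishes by the affine relation together with the slope identity just derived. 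Hence $\vec{g}\in\partial f(\pp_2)$ as well, so $\partial f(\pp_1)\cap\partial f(\pp_2)\neq\emptyset$, contradicting the hypothesis. I expect this converse to be the main obstacle, and within it the crux is verifying that affine behavior on a segment suffices to transport a single subgradient from $\pp_1$ to $\pp_2$; the directional-derivative identity and the nonemptiness of $\partial f(\pp_1)$ (where relative openness is indispensable) are the two ingredients that make this transport go through.
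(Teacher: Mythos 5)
Your proof is correct, and in substance it rests on the same pivot as the paper's: a shared subgradient is the same thing as a shared affine minorant touching the graph at two points, which forces $f$ to agree with a chord. The difference is one of mechanism rather than of idea. In the direction ``common subgradient $\Rightarrow$ not strictly convex,'' you sum the two subgradient inequalities and evaluate at the midpoint, whereas the paper phrases the identical computation geometrically via a supporting hyperplane containing both $(\pp_1,f(\pp_1))$ and $(\pp_2,f(\pp_2))$. In the converse direction the routes genuinely diverge: the paper takes a subgradient at the chord-touching point $\lambda\pp_1+(1-\lambda)\pp_2$ itself and observes that its supporting hyperplane must also pass through the two endpoints, so no upgrade to ``affine on the whole segment'' and no directional-derivative argument is needed; you instead first invoke the (correct) observation that failure of strict convexity yields a nondegenerate segment on which $f$ is affine, then transport a subgradient between two \emph{interior} points of that segment by pinning the directional derivative on both sides. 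Your version costs an extra preliminary lemma and the two-sided interiority argument, but it is fully rigorous and makes explicit where relative openness (nonemptiness of $\partial f(\pp_1)$) enters --- a point the paper's proof also uses but leaves implicit when it writes ``take $\vv\in\partial f(\pp)$.''
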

\begin{proof}
Suppose that $f$ is not strictly convex on $\domain$.
Then, there exist $\pp_1, \pp_2 \in \domain$ and
    $\lambda \in (0,1)$ such that
    $f(\lambda \pp_1 + (1-\lambda)\pp_2)
    = \lambda f(\pp_1) + (1-\lambda) f(\pp_2)$.
Take $\vv\in \partial f(\pp)$, and
    let $H$ be a graph of an affine function
    $h(\qq)\equiv f(\pp) + \inner{\qq - \pp}{\vv}$.
Then, $H$ is a supporting hyperplane of the epigraph of $f$ at $(\pp, f(\pp))$.
Because $(\pp, f(\pp))$ belongs to the relative interior of the line segment
    joining $(\pp_1, f(\pp_1))$ and $(\pp_2, f(\pp_2))$,
    these two points also lie in $H$.
Therefore, $\vv\in \partial f(\pp_1)$ and $\vv\in\partial f(\pp_2)$,
    which implies that $\partial f(\pp_1) \cap \partial f(\pp_2) \neq \emptyset$.

Conversely, suppose that there exist
    two different points $\pp_1$ and $\pp_2$ in $\domain$
    such that $\partial f(\pp_1) \cap \partial f(\pp_2) \neq \emptyset$.
Let $\vv$ be an element of $\partial f(\pp_1) \cap \partial f(\pp_2)$.
Then for a certain constant $k$, a graph $H$ of
    an affine function $h(\qq) = \inner{\qq-\pp}{\vv} + k$
    is a supporting hyperplane of the epigraph of $f$
    and contains $(\pp_1, f(\pp_1))$ and $(\pp_2, f(\pp_2))$.
This implies that $H$ contains the line segment joining
    $(\pp_1, f(\pp_1))$ and $(\pp_2, f(\pp_2))$.
Thus, $f$ cannot be strictly convex
    along the line segment connecting $\pp_1$ and $\pp_2$.
\end{proof}
By applying this lemma to $S_{\truelabels'}$,
    its strict convexity becomes equivalent to
    the injectivity of the subdifferential map $\partial S_{\truelabels'}$.
On the other hand,
    the inverse of a subdifferential map
    of a closed convex function is the subdifferential map
    of its conjugate function~(Corollary~23.5.1 in \citet{rockafellar1996}).
Indeed, it holds that
\begin{align}
    S_{\truelabels'}(\pp)
    &= \sup_{\vv\in C} \left[
        \inner{\pp}{\sigma_{\truelabels'}\circ\pi_{\truelabels'}(\vv)}
        + \inner{\frac{1}{\card{\truelabels'}} \one_{\truelabels'}}
            {\pi_{\truelabels'}(\vv)}
        - F(\vv)
    \right] \\
    &= \sup_{\vv'\in \sigma_{\truelabels'}\circ\pi_{\truelabels'}(C)}
        \left\{\sup_{\vv\in C\cap \pi_{\truelabels'}^{-1}\circ \sigma_{\truelabels'}^{-1}(\vv')} \left[
        \inner{\pp}{\vv'}
        + \inner{\frac{1}{\card{\truelabels'}} \one_{\truelabels'}}
            {\pi_{\truelabels'}(\vv)}
        - F(\vv)
    \right] \right\} \\
    &= \sup_{\vv'\in \sigma_{\truelabels'}\circ\pi_{\truelabels'}(C)} \left[
        \inner{\pp}{\vv'} - F_{\truelabels'}(\vv')
    \right],
\end{align}
and therefore, that
    $\partial S_{\truelabels'} = (\partial F_{\truelabels'})^{-1}$.
This implies that the necessary and sufficient condition for
    $\partial S_{\truelabels'}(\pp_1)
        \cap \partial S_{\truelabels'}(\pp_2) = \emptyset$
    for $\pp_1\neq \pp_2$
    is that
    $\{ \vv' \vert
        \{ \pp_1, \pp_2 \} \subset \partial F_{\truelabels'}(\vv') \}
    =\emptyset$.

\begin{lemma}
Suppose that $\pp \in \partial F_{\truelabels'}(\vv)$
    for some $\pp\in\interior\distr(\truelabels')$
    and $\vv$ in the domain of $\partial F_{\truelabels'}$.
Then, no other point in $\interior \distr(\truelabels')$ belongs
    to $\partial F_{\truelabels'}(\vv)$
    if and only if
    $F_{\truelabels'}\circ \sigma_{\truelabels'}^{-1}$
    is differentiable at $\sigma_{\truelabels'}(\vv)$.
\end{lemma}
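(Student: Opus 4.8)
The plan is to reduce the statement to the classical characterization that a convex function is differentiable at a point of its domain exactly when its subdifferential there is a singleton (Theorem~25.1 in \citet{rockafellar1996}), and then to track how the simplex constraint interacts with the subdifferential through the coordinate map $\sigma_{\truelabels'}$. Throughout, $\partial F_{\truelabels'}(\vv)$ is understood in the ambient space $\real^{\truelabels'}$, so that a vector $\pp\in\distr(\truelabels')$ qualifies as a subgradient.

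First I would make explicit the three subdifferentials involved and the maps relating them. Since $F_{\truelabels'}$ depends only on the $\one_{\truelabels'}^\perp$-component of its argument, its ambient subdifferential decomposes as $\partial F_{\truelabels'}(\vv)=\partial_\perp F_{\truelabels'}(\vv)+\real\one_{\truelabels'}$, where $\partial_\perp F_{\truelabels'}(\vv)\subset\one_{\truelabels'}^\perp$ denotes the subdifferential computed intrinsically within the subspace $\one_{\truelabels'}^\perp$. Because $\sigma_{\truelabels'}\colon\one_{\truelabels'}^\perp\to\real^{\truelabels'-\{z'_1\}}$ is a linear isomorphism, the chain rule for subdifferentials identifies $\partial(F_{\truelabels'}\circ\sigma_{\truelabels'}^{-1})(\sigma_{\truelabels'}(\vv))$ with the image of $\partial_\perp F_{\truelabels'}(\vv)$ under the adjoint map $\sigma_{\truelabels'}^\ast$; this is a linear bijection, so one of these sets is a singleton iff the other is. Finally, intersecting $\partial F_{\truelabels'}(\vv)$ with $\aff\distr(\truelabels')$ selects, for each $\pp_\perp\in\partial_\perp F_{\truelabels'}(\vv)$, the unique translate $\Phi(\pp_\perp)\equiv\pp_\perp+\frac{1}{\card{\truelabels'}}\one_{\truelabels'}$ with coordinate sum $1$; thus the points of $\interior\distr(\truelabels')$ that lie in $\partial F_{\truelabels'}(\vv)$ are exactly the images under the affine bijection $\Phi$ of those $\pp_\perp$ whose translate is strictly positive. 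Note that the given $\pp$ satisfies $\pp=\Phi(\rho_{\truelabels'}(\pp))$.

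With this dictionary the forward direction is immediate: if $F_{\truelabels'}\circ\sigma_{\truelabels'}^{-1}$ is differentiable at $\sigma_{\truelabels'}(\vv)$, its subdifferential there is a single point, so $\partial_\perp F_{\truelabels'}(\vv)$ is a singleton, and hence $\partial F_{\truelabels'}(\vv)$ meets $\aff\distr(\truelabels')$ in a single point. Since the given $\pp$ is one such point, no other point of $\interior\distr(\truelabels')$ can belong to $\partial F_{\truelabels'}(\vv)$.

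The converse, which I expect to be the main obstacle, is where the simplex geometry enters: one must rule out the scenario in which $\partial_\perp F_{\truelabels'}(\vv)$ contains several points but only one of them maps into the open simplex. I would argue by contraposition. If $F_{\truelabels'}\circ\sigma_{\truelabels'}^{-1}$ is not differentiable at $\sigma_{\truelabels'}(\vv)$ then, its subdifferential being nonempty (because $\pp\in\partial F_{\truelabels'}(\vv)$), it is not a singleton, so $\partial_\perp F_{\truelabels'}(\vv)$ contains a point $\pp_\perp^1$ distinct from $\pp_\perp^0\equiv\rho_{\truelabels'}(\pp)$. The key observation is that $\partial_\perp F_{\truelabels'}(\vv)$ is convex while $\Phi^{-1}(\interior\distr(\truelabels'))$ is relatively open in $\one_{\truelabels'}^\perp$ and contains $\pp_\perp^0$; hence every point $\pp_\perp^0+\epsilon(\pp_\perp^1-\pp_\perp^0)$ with small $\epsilon>0$ again lies in $\partial_\perp F_{\truelabels'}(\vv)$ and is carried by $\Phi$ into $\interior\distr(\truelabels')$. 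This produces a second point of $\interior\distr(\truelabels')$ inside $\partial F_{\truelabels'}(\vv)$, contradicting uniqueness. The only subtleties to handle carefully are the standard equivalence between differentiability and a singleton subdifferential---including the fact that at a boundary point of the domain nonemptiness of the subdifferential already forces non-uniqueness, so the dichotomy holds there too---and the verification that $\Phi$ carries $\partial_\perp F_{\truelabels'}(\vv)$ bijectively onto the slice $\partial F_{\truelabels'}(\vv)\cap\aff\distr(\truelabels')$.
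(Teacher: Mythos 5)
Your proposal is correct and follows essentially the same route as the paper's proof: identify $\partial(F_{\truelabels'}\circ\sigma_{\truelabels'}^{-1})(\sigma_{\truelabels'}(\vv))$ with the projection of $\partial F_{\truelabels'}(\vv)$, use the invariance of the latter under translations by $\one_{\truelabels'}$, and invoke the standard equivalence between differentiability and a singleton subdifferential. Your contraposition argument for the converse---using convexity of the subdifferential together with the relative openness of $\interior\distr(\truelabels')$ to manufacture a second subgradient inside the open simplex---makes explicit a step that the paper's proof leaves implicit.
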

\begin{proof}
Let $\vv_0$ be $\sigma_{\truelabels'}(\vv)$ and
    $\pp_0$ be $\sigma_{\truelabels'}\circ\rho_{\truelabels'}(\pp)$.
We can verify by direct calculation that
    $\partial (F_{\truelabels'}\circ \sigma_{\truelabels'}^{-1})(\vv_0)
    = \sigma_{\truelabels'} \circ \rho_{\truelabels'} (
        \partial F_{\truelabels'}(\vv)
    )$.
In addition, because $F_{\truelabels'}$ is defined on
    (a subset of) $\one_{\truelabels}^\perp$,
    $\qq + t\one_{\truelabels'} \in \partial F_{\truelabels'}(\vv)$
    for any $\qq \in \partial F_{\truelabels'}(\vv)$ and $t\in \real$.
Therefore, $\{\pp_0\} \in
    \partial (F_{\truelabels'}\circ \sigma_{\truelabels'}^{-1})(\vv_0)$
    is equivalent to
    $\partial F_{\truelabels'}(\vv) 
    = \{ \pp_0 + t\one_{\truelabels'} \vert t\in\real \}$,
    in which case $\partial F_{\truelabels'}(\vv)$ contains
    one and only one element of $\interior\distr(\truelabels')$.
This implies the lemma because
    a convex function $F_{\truelabels'}\circ \sigma_{\truelabels'}^{-1}$
    is differentiable at a point $\vv_0$ if and only if
    it has a unique subgradient there.
\end{proof}

Finally, by combining these lemmas, we obtain Theorem~\ref{thm:strictly-proper}.

\section{Forward-Correction Loss}\label{apdx:forwardcorrection}

In this appendix, we verify that a forward-corrected loss $l_\mathrm{W}$ conforms to Theorem~\ref{thm:T-proper-dual-rep}.
A weak-label loss $l_\mathrm{W}: \tldistr\times\weaklabels\to \extreal$ is called the forward correction of $l_\weaklabels$ if $l_\mathrm{W}(\qq, y) = l_\weaklabels(T\qq, y)$, where $l_\weaklabels: \wldistr\times\weaklabels \to \extreal$ is a proper loss for estimating weak-label posterior probabilities.

We first apply Theorem~\ref{thm:dual-rep} to $l_\weaklabels$ and find that $l_\weaklabels(\qq, y) = - [\subgrad F^\ast_\weaklabels(\qq)]_y + F_\weaklabels(\subgrad F^\ast_\weaklabels(\qq))$ for $\qq\in \wldistr$; here, $F^\ast_\weaklabels(\qq)$ is the negative Bayes risk corresponding to $l_\weaklabels$, and $F_\weaklabels(\vv)$ is its convex conjugate.
We also have the negative Bayes risk $S(\qq)$ and its conjugate $S^\ast(\vv)$ for the weak-label loss $l_\mathrm{W}$.
A key identity among these quantities is $S(\qq)=F^\ast_\weaklabels(T\qq)$, which further implies that $\subgrad S(\qq) = T\T \subgrad F^\ast_\weaklabels(T\qq)$.
The latter can be inverted to find that $\subgrad F^\ast_\weaklabels(T\qq) = R\T \subgrad S(\qq) - \vec{\Delta}(\qq)$ for some function $\vec{\Delta}(\qq)$ that takes values on $\coker T$.
It also holds that
\begin{align}
    F_\weaklabels(\subgrad F^\ast_\weaklabels(T\qq) ) + F^\ast_\weaklabels(T\qq)
    = \inner{T\qq}{\subgrad F^\ast_\weaklabels(T\qq)}\\
    = \inner{\qq}{\subgrad S(\qq)}
    = S^\ast(\subgrad S(\qq)) + S(\qq), \label{eq:rel-conj-funcs}
\end{align}
where the first and third equalities follow from the equality condition of the Fenchel-Young inequality~\citep{rockafellar1996}.
By using $S(\qq) = F^\ast_\weaklabels(T\qq)$ in Eq.~\eqref{eq:rel-conj-funcs}, we find that $F_\weaklabels(\subgrad F^\ast_\weaklabels(T\qq)) = S^\ast(\subgrad S(\qq))$.
Therefore, we confirm that
\begin{align}
    l_\mathrm{W}(\qq, y)
    &= l_\weaklabels(T\qq, y) \\
    &= - [R\T \subgrad S(\qq)]_y+ S^\ast(\subgrad S(\qq)) + \Delta_y(\qq),
\end{align}
which conforms toTheorem~\ref{thm:T-proper-dual-rep}.

\section{Linear-Algebraic Properties of Reconstruction Matrix}\label{apdx:R}

In this appendix, we present a proof that for any reconstructible label transition matrix $T$, there exists a reconstruction matrix $R$ such that $R\T \onez = \oney$.
Note that this further implies that $T\T (R\T \onez - \oney) = \vec{0}$, or that $R\T \onez - \oney \in \coker T$.

A transition matrix $T$ satisfies the identity $T\T \oney = \onez$.
This implies that for any $\vv \in \onezperp$,
\begin{align}
    \inner{T\vv}{\oney} = 0,
\end{align}
and thus, that $T\onezperp \subset \oney^\perp$.
Therefore, the restriction $T'$ of $T$ on $\onezperp$ has a left-inverse $R'$ defined on $\oney^\perp$.
A matrix $R=R' + k \onez \oney\T$ is also a left-inverse of $T'$.
Because $T\onez \notin \oney^\perp$, we can choose $k$ such that $R$ is a left-inverse of $T$.
For such $R$, it holds that $R\T\onez \propto \oney$, but because $T\T R\T \onez = \onez$ and $T\T \oney =\onez$, we conclude that $R\T \onez = \oney$.

\section{Experimental Details}\label{apdx:experiment}

\paragraph{Datasets}
We used the MNIST~\cite{lecun.etal1998} and CIFAR-10~\cite{krizhevsky2009} datasets.
Each dataset defines its training and test splits.
In our experiment, we split the training split into two splits: one was used for training, and the other was used for validation.
Table~\ref{tab:splits} lists the numbers of examples in the dataset splits.
\begin{table}
    \centering
    \caption{Numbers of examples in the dataset splits.
    ``Training (original)'' refers to those that are originally defined as training splits in the datasets,
    while ``Training (used)'' indicates those that were actually used in training.}
    \label{tab:splits}
    \vskip 0.1in
    \begin{small}
    \begin{tabular}{ccccc}
    \toprule
        Dataset & Training (original) & Training (used) & Validation & Test \\
    \midrule
        MNIST
            & 60,000
            & 54,000
            & 6,000
            & 10,000 \\
        CIFAR-10
            & 50,000
            & 45,000
            & 5,000
            & 10,000 \\
    \bottomrule
    \end{tabular}
    \end{small}
\end{table}

Before starting the experiments, we converted the ground-truth labels in the training splits into complementary labels.
A complementary label for an instance was randomly chosen with probabilities given by the transition matrix in Eq.~\eqref{eq:Tcomp}.

\paragraph{Training procedure}
\begin{table}
    \centering
    \caption{Initial learning rates with which the best validation accuracy was achieved for each setting.}
    \vskip 0.1in
    \begin{small}
    \begin{tabular}{cccccc}
        \toprule
        &
            Weight decay &
            MNIST, linear &
            MNIST, MLP &
            CIFAR-10, ResNet-20 &
            CIFAR-10, WRN-28-2 \\
        \midrule
        BC & 
            fixed &
            0.003  &
            0.0001 &
            0.001  &
            0.001  \\
        BC & 
            tuned &
            0.0001  &
            0.0001 &
            0.0003  &
            0.001  \\
        BC + GA &
            fixed &
            0.0001 &
            0.01   &
            0.01   &
            0.003  \\
        BC + GA &
            tuned &
            0.001 &
            0.01   &
            0.003   &
            0.003  \\
        BC + gLS &
            fixed &
            0.0003 &
            0.0003 &
            0.1    &
            0.03   \\
        \bottomrule
    \end{tabular}
    \end{small}
    \label{tab:learning_rate}
\end{table}
\begin{table}
    \centering
    \caption{Weight decay coefficient with which the best validation accuracy was achieved for BC and BC~+~GA.}
    \vskip 0.1in
    \begin{small}
    \begin{tabular}{ccccc}
        \toprule
        &
            MNIST, linear &
            MNIST, MLP &
            CIFAR-10, ResNet-20 &
            CIFAR-10, WRN-28-2 \\
        \midrule
        BC & 
            0.0003  &
            0.001 &
            0.001  &
            0.01  \\
        BC + GA &
            0.1 &
            0.001   &
            0.0003   &
            0.0003  \\
        \bottomrule
    \end{tabular}
    \end{small}
    \label{tab:weight_decay}
\end{table}
We used stochastic gradient descent with momentum to optimize the models.
The momentum and the mini-batch size were fixed to $0.9$ and $256$, respectively. The initial learning rates were chosen from $\{ 0.1, 0.03, 0.01, 0.003, 0.001, 0.0003, 0.0001\}$ as those giving the best validation accuracy.
When a learning rate of 0.1 or 0.0001 achieved the best validation accuracy, we also tried two more values beyond the predefined range.
In all such cases, we confirmed that the chosen values were at a peak or on a plateau of the validation accuracy.
The chosen values are listed in Table~\ref{tab:learning_rate}.
The default value of the weight decay coefficient is $10^{-4}$, but when it is tuned, it is chosen from $\{ 0.1, 0.03, 0.01, 0.003, 0.001, 0.0003, 0.0001\}$.
The values of the weight decay coefficient that achieved the best validation accuracy are listed in Table~\ref{tab:weight_decay}.

\begin{table}
    \centering
    \caption{Values of the coefficient $k$ in Eq.~\eqref{eq:squared-logits} as chosen by the validation accuracy.}
    \vskip 0.1in
    \begin{small}
    \begin{tabular}{cc}
    \toprule
        Dataset and model & $k$ \\
    \midrule
        MNIST, linear & 0.03 \\
        MNIST, MLP & 1.0 \\
        CIFAR-10, ResNet-20 & 1.0 \\
        CIFAR-10, WRN-28-2 & 1.0 \\
    \bottomrule
    \end{tabular}
    \end{small}
    \label{tab:k}
\end{table}
Our proposed method, generalized logit squeezing (gLS), has two hyperparameters: the exponent $\alpha$ and the coefficient $k$.
For a fixed $\alpha$, we searched for the value of $k$ that achieved the best validation accuracy.
The candidate values were 10, 3, 1, 0.3, 0.1, 0.03, and 0.01.
These results are listed in Table~\ref{tab:k}.

\begin{table}
    \centering
    \caption{Numbers of epochs at which the best validation accuracy was achieved for each setting.}
    \vskip 0.1in
    \begin{small}
    \begin{tabular}{ccccc}
        \toprule
        &
            MNIST, linear &
            MNIST, MLP &
            CIFAR-10, ResNet-20 &
            CIFAR-10, WRN-28-2 \\
        \midrule
        BC & 
            4.1 &
            46.6 &
            22.6 &
            24.6 \\
        BC + GA &
            54.1 &
            62.7 &
            83.6 &
            76.1 \\
        BC + gLS &
            19.5 &
            59.1 &
            47.9 &
            39.7 \\
        \bottomrule
    \end{tabular}
    \end{small}
    \label{tab:epochs}
\end{table}
We adopted early stopping to determine the training time.
Specifically, when the validation accuracy had not improved for the last 10 epochs, the learning rate was reduced by a factor of 10, and the third time the same condition was satisfied, the training was terminated.
The test accuracy reported here is for the epochs with the best validation accuracy.
Table~\ref{tab:epochs} lists the numbers of epochs at which the best validation accuracy was achieved.

We used a simple grid search strategy for the hyperparameter search.
The best hyperparameters (i.e., the learning rate and the gLS coefficient) were used in the evaluation step, in which a randomly initialized model was trained on the training split and evaluated on the test split.
The training duration in the evaluation step was also determined by the early stopping strategy as described above.

\paragraph{Other details}
All the experiments were performed using on-premise computation servers equipped with NVIDIA's GeForce GTX 1080Ti and Tesla V100.
The training duration varied significantly, depending on the methods and the model size, but the longest run took less than one hour on the Tesla V100.
We used PyTorch~\cite{pytorch} to implement the experiments.

\end{document}